%% file: example_paper.tex
\documentclass{article}
\usepackage{microtype}
\usepackage{graphicx}
\usepackage{subfigure}
\usepackage{booktabs} 

\usepackage{hyperref}
\usepackage{listings}
\usepackage{xcolor}
\definecolor{codegreen}{rgb}{0,0.6,0}
\definecolor{codegray}{rgb}{0.5,0.5,0.5}
\definecolor{codepurple}{rgb}{0.58,0,0.82}
\definecolor{backcolour}{rgb}{0.95,0.95,0.92}
\lstdefinestyle{mystyle}{
    backgroundcolor=\color{backcolour},   
    commentstyle=\color{codegreen},
    keywordstyle=\color{magenta},
    numberstyle=\tiny\color{codegray},
    stringstyle=\color{codepurple},
    basicstyle=\ttfamily\footnotesize,
    breakatwhitespace=false,         
    breaklines=true,                 
    captionpos=b,                    
    keepspaces=true,                 
    numbers=left,                    
    numbersep=5pt,                  
    showspaces=false,                
    showstringspaces=false,
    showtabs=false,                  
    tabsize=4,
    language=Python
}

\usepackage{multirow}

\usepackage[accepted]{icml2026}

\usepackage{amsmath}
\usepackage{amssymb}
\usepackage{mathtools}
\usepackage{amsthm}
\newcommand{\best}[1]{\underline{\textbf{\color{blue} #1}}}

\usepackage[capitalize,noabbrev]{cleveref}

\theoremstyle{plain}
\newtheorem{theorem}{Theorem}[section]

\theoremstyle{definition}

\theoremstyle{remark}

\newif\ifshowtmp
\showtmptrue

\NewDocumentCommand{\Distance}{ O{} }{  {\mathbb{D}}_{\rm #1}  }

\NewDocumentCommand{\constraint}{ O{} O{t} }{ {\mathcal{C}}^{\rm #1}_{#2}  }

\NewDocumentCommand{\loss}{ O{} }{  \mathcal{L}^{\rm #1}  }

\NewDocumentCommand{\ratio}{ O{t} }{ w_{#1}  } 

\NewDocumentCommand{\KL}{ O{t} }{  {\rm KL}_{#1} } 
\NewDocumentCommand{\KLThree}{ O{t} }{  {\rm KL3}_{#1} } 
\NewDocumentCommand{\KLThreeText}{ }{KL3 }

\NewDocumentCommand{\lowerClippingRange}{ O{} O{}  }{  l_{#1}^{#2}  } 
\NewDocumentCommand{\upperClippingRange}{ O{} O{} }{  u_{#1}^{#2}  }

\input{lib_custom/A_ALL}

\ifshowtmp
    \newcommand{\qingyuan}[1]{\textcolor{blue}{(\textbf{Qingyuan}: #1)}}
    \newcommand{\yuhui}[1]{\textcolor{teal}{(\textbf{Yuhui}: #1)}}
    \newcommand{\simon}[1]{\textcolor{orange}{(\textbf{Simon:} #1)}}
    \newcommand{\shilong}[1]{\textcolor{blue}{(\textbf{Shilong}: #1)}}
\else
    \newcommand{\qingyuan}[1]{}
    \newcommand{\yuhui}[1]
    \newcommand{\simon}[1]
    \newcommand{\shilong}[1]
\fi

\icmltitlerunning{A Unified Framework for Rethinking Policy Divergence Measures in GRPO}

\begin{document}

\twocolumn[
\icmltitle{A Unified Framework for Rethinking Policy Divergence Measures in GRPO}



  \icmlsetsymbol{equal}{*}

\begin{icmlauthorlist}
\icmlauthor{Qingyuan Wu}{southampton,cohere,equal}
\icmlauthor{Yuhui Wang}{kaust,equal}
\icmlauthor{Simon Sinong Zhan}{northwestern}
\icmlauthor{Yanning Dai}{kaust}
\icmlauthor{Shilong Deng}{liverpool}
\icmlauthor{Sarra Habchi}{cohere}
\icmlauthor{Qi Zhu}{northwestern}
\icmlauthor{Matthias Gallé}{cohere}
\icmlauthor{Chao Huang}{southampton}
\end{icmlauthorlist}

\icmlaffiliation{cohere}{Cohere}
\icmlaffiliation{southampton}{University of Southampton}
\icmlaffiliation{liverpool}{University of Liverpool}
\icmlaffiliation{kaust}{KAUST}
\icmlaffiliation{northwestern}{Northwestern University}

\icmlcorrespondingauthor{Qingyuan Wu}{qingyuan.wu@soton.ac.uk}
\icmlkeywords{Machine Learning, ICML}

\vskip 0.3in
]

\printAffiliationsAndNotice{\icmlEqualContribution}

\begin{abstract}
Reinforcement Learning with Verified Reward (RLVR) has emerged as a critical paradigm for advancing the reasoning capabilities of Large Language Models (LLMs). Most existing RLVR methods, such as GRPO and its variants, ensure stable updates by constraining policy divergence through clipping likelihood ratios. This paper introduces a unified clipping framework that characterizes existing methods via a general notion of policy divergence, encompassing both likelihood ratios and Kullback-Leibler (KL) divergences and extending to alternative measures. The framework provides a principled foundation for systematically analyzing how different policy divergence measures affect exploration and performance. We further identify the KL3 estimator, a variance-reduced Monte Carlo estimator of the KL divergence, as a key policy divergence constraint. We theoretically demonstrate that the KL3-based constraint is mathematically equivalent to an asymmetric ratio-based clipping that reallocates probability mass toward high-confidence actions, promoting stronger exploration while retaining the simplicity of GRPO-style methods. Empirical results on mathematical reasoning benchmarks demonstrate that incorporating the KL3 estimator into GRPO improves both training stability and final performance, highlighting the importance of principled policy divergence constraints in policy optimization.
\end{abstract}

\section{Introduction}\label{sec:introduction}
Reinforcement Learning (RL)~\citep{kaelbling1996reinforcement, rlai} has served as a pivotal training paradigm in decision-making problems~\citep{tesauro1994td, silver2016mastering, mnih2013playing, berner2019dota}, and has recently been playing a central role in advancing Large Language Models (LLMs)~\citep{ouyang2022training, lambert2024tulu}.
RL provides an efficient and general training framework for LLMs, enabling optimization over complex, non-differentiable objectives that extend beyond direct supervised learning from human data.
This capability is particularly critical for real-world tasks such as code generation~\citep{jain2024livecodebench}, mathematical reasoning~\citep{zhang2024careful, cobbe2021training}, and dialogue alignment~\citep{chiang2024chatbot}.

Current RL methodologies for LLMs, particularly in RL with Verified Reward (RLVR)~\citep{lambert2024tulu} settings, predominantly rely on Proximal Policy Optimization (PPO)~\citep{proximal_policy_optimization}.
PPO ensures training stability through the ratio-based clipping mechanism, aiming to approximate the trust-region constraint of Trust Region Policy Optimization (TRPO)~\citep{trpo}.
Recently, Group Relative Policy Optimization (GRPO)~\citep{shao2024deepseekmath} has emerged as a memory-efficient alternative for training large-scale LLMs by using group-normalized returns as the advantage baselines, thus eliminating the need to maintain a separate value function.
Like PPO, GRPO and its variants~\citep{yu2025dapo, yang2025dcpo} rely on the ratio-based clipping to ensure stable policy updates.

Despite their success, the reliance on ratio-based clipping constitutes a specific and potentially restrictive design choice within the broader landscape of policy optimization. 
While these methods aim to ensure stable updates by constraining policy divergences through clipping likelihood ratios, recent studies~\citep{cui2025entropy, park2025clip} reveal that both training exploration and evaluation performance are highly sensitive to the specific definition and implementation of policy divergence constraints.
Although different variants of the ratio-based clipping mechanism~\citep{yu2025dapo, yang2025dcpo} have been proposed, a principled understanding of how different policy divergence measures and corresponding constraints affect the trade-off between exploration and stability remains largely unexplored.

To address this issue, this paper first introduces a unified clipping framework that characterizes existing clipping methods under a general notion of policy divergence. This unified clipping framework provides a foundational perspective for analyzing various policy divergence constraints, encompassing both likelihood ratios and Kullback--Leibler (KL) divergences.
Furthermore, we identify that the $\KLThree[]$ estimator~\citep{schulman2020approximating} serves as the pivotal policy divergence constraint under our framework.
We theoretically demonstrate that the $\KLThree[]$-based constraint is mathematically equivalent to an asymmetric ratio-based clipping.
Based on these observations, we propose Approximate Trust Region-based GRPO (ATR-GRPO).
Unlike standard symmetric ratio-based clipping, ATR-GRPO leverages the \KLThreeText estimator to actively steer exploration by reallocating probability mass toward high-confidence actions, all while maintaining the computational efficiency of GRPO.
Comprehensive experiments on mathematical reasoning benchmarks demonstrate that \KLThreeText estimator consistently improves training stability and final performance compared to existing state-of-the-art (SOTA) baselines.
Our contributions can be summarized as follows:
\begin{itemize}
    \item We introduce a unified clipping framework for policy optimization that unifies existing policy divergence constraints and can be extended to arbitrary alternative measures.
    \item We identify the KL3 estimator as an effective policy divergence constraint, establish its connection to ratio-based constraints, and theoretically show that it promotes stronger exploration than existing alternatives.
    \item Building on these insights, we develop ATR-GRPO, which exhibits improved exploration dynamics while retaining the simplicity of existing methods.
    \item We empirically show that ATR-GRPO enhances learning stability and achieves performance competitive with various SOTA baselines.
\end{itemize}

\section{Related Work}\label{sec:related_works}
\paragraph{RL for LLMs.}
Reinforcement Learning (RL) has been established as the standard paradigm for advancing Large Language Models (LLMs), such as RLVR~\citep{lambert2024tulu}.
PPO~\citep{proximal_policy_optimization} utilizes a clipped surrogate objective to ensure stable policy updates. However, PPO requires training a separate critic model, which can be prohibitive for large-scale reasoning tasks. 
To mitigate these memory constraints, RLOO~\cite{ahmadian2024back} eliminates the critic by employing the Leave-One-Out baseline to reduce variance by averaging rewards across other samples within the batch.
GRPO~\citep{shao2024deepseekmath} calculates advantages relative to a sampled group of outputs for each prompt, effectively optimizing memory usage for reasoning tasks.
GSPO~\citep{zheng2025group} further extends this paradigm by elevating the optimization granularity to the sequence level, aligning updates with reward signals.
Building on GRPO or GSPO, some recent works such as GTPO~\citep{tan2025gtpo} and EMPO~\citep{zhang2025right} incorporate semantic entropy into reward shaping to address the persistent challenge of sparse credit assignment.

\paragraph{Trust Region Methods in RL.}
Trust-region methods underpin stable policy optimization in RL, from NPG~\citep{kakade2001natural} to the explicit KL-constrained formulation of TRPO~\citep{trpo}.
However, TRPO relies on computationally expensive second-order optimization, making it impractical for large-scale RL tasks. 
To address this scalability issue, PPO~\citep{proximal_policy_optimization} was introduced as a first-order approximation that replaces the hard KL constraint with a ratio-based clipping, aiming to implicitly constrain policy updates. While PPO has become the de facto choice for large-scale training due to its computational efficiency, its reliance on ratio-based clipping does not explicitly enforce a trust-region constraint. Prior work has shown that this approximation can fail to properly control policy updates, leading to potential optimization instability, as demonstrated by TRGPPO and Truly PPO~\citep{wang2019trust, wang2020truly}.
In this work, we propose a unified clipping framework that generalizes different policy divergence constraints, including likelihood ratios and KL divergences.

\paragraph{Clipping Mechanisms in RL.}
TRGPPO~\citep{wang2019trust} shows that ratio-based clipping in PPO overly constrains low-likelihood probabilities, and introduces dynamic clipping to relax this effect.
Truly PPO~\citep{wang2020truly} further exposes the divergence between ratio-based and KL-based constraints, and proposes a KL-based clipping method as an alternative.
DAPO~\citep{yu2025dapo} introduces an asymmetric clip-higher mechanism that increases the upper clipping range, mitigating entropy collapse and facilitating probability increases for low-likelihood exploratory tokens.
DCPO~\citep{yang2025dcpo} proposes the dynamic ratio-based clipping mechanism, which can adaptively adjust the clipping ranges based on the prior probabilities to strengthen exploration ability.
Aside from exploration purposes, the dual clipping mechanism~\citep{ye2020mastering} is proposed to ensure the convergence and stability in large-scale distributed RL training of MOBA games, which is also adopted as the default technique in the LLM training framework~\citep{sheng2025hybridflow}.
SAPO~\citep{gao2025soft} presents the soft gate operation to replace the hard clipping, stablizing the optimization. 
These ratio-based clipping mechanisms, however, designed for stability, exert a profound and often detrimental influence on policy entropy~\citep{cui2025entropy, park2025clip}.
Different from previous approaches, this paper identifies the KL3 estimator~\citep{schulman2020approximating} as the pivotal policy divergence constraint within our proposed framework, demonstrated by our theoretical analysis and empirical evaluation.

\begin{table*}[t]
\centering
\caption{
Comparative analysis of policy divergence constraints. Unlike previous methods that rely on either heuristic symmetric ratio-based clipping (PPO, GRPO), asymmetric ratio-based clipping (DAPO) or computationally expensive full expectation of the KL divergence (TRPO, Truly PPO), our ATR-GRPO achieves a principled, approximate trust-region constraint with low computational cost.
\label{table::clipping_comparison}}
\resizebox{\textwidth}{!}{%
\begin{tabular}{lccc}
\hline\hline
Method  & Clipping / Constraint Criterion                                                                                                                                     & Trust-Region Constraint? & Computational Cost \\ \hline
PPO~\citep{proximal_policy_optimization}, GRPO~\citep{shao2024deepseekmath}             & $\ratio(\theta) \in [1 - \epsilon, 1 + \epsilon]$                                                                                                          & $\times$                 & Low                     \\
DAPO~\citep{yu2025dapo}      & $\ratio(\theta) \in [1 - \epsilon_l, 1 + \epsilon_u]$                                                                                             & $\times$                 & Low                      \\
DCPO~\citep{yang2025dcpo} & $\ratio(\theta) \in [0.5 + \frac{1}{2}\sqrt{\max(1 - \frac{4\epsilon_l}{\pi_{\theta_\text{old}}})}, 0.5 + \frac{1}{2}\sqrt{1 + \frac{4\epsilon_u}{\pi_{\theta_\text{old}}}}]$ & $\times$                 & Low                   \\
TRPO~\citep{trpo}          & $\KL(\theta) \leq \delta$                                                             & $\checkmark$             & High                  \\
TRGPPO~\citep{wang2019trust}          & $\begin{aligned}
\ratio(\theta) \in \bigl[\min\!\bigl(l_\delta^\text{KL},\, 1-\epsilon\bigr),\max\!\bigl(u_\delta^\text{KL},\, 1+\epsilon\bigr)
\bigr]
\end{aligned}$                                                             & $\checkmark$             & Medium                      \\
Truly PPO~\citep{wang2020truly}          & $\KL(\theta) \leq \delta$                                                             & $\checkmark$             & Medium (for large action space $|\mathcal{A}|$)                   \\
ATR-GRPO (ours)   & $\KLThree(\theta) \leq \delta \text{ or } \ratio(\theta) \in [\lowerClippingRange[\delta][\KLThree[]], \upperClippingRange[\delta][\KLThree[]]]$                                                          & $\checkmark$ (approximated by \KLThreeText)      & Low                      \\ \hline\hline
\end{tabular}
}
\end{table*}

\section{Preliminaries}\label{sec:preliminaries}
\paragraph{Language Model Generation as MDP.}
Language generation process can be formulated as a Markov Decision Process (MDP), denoted by the tuple $(\mathcal{S}, \mathcal{A}, \mathcal{T}, r)$.
Here, $\mathcal{S}$ denotes the state space, where the state at timestep $t$ is defined as $s_t \triangleq (x, y_{<t})$, corresponding to the concatenation of the input query $x$ and the partially generated response $y_{<t}$.
The action space $\mathcal{A}$ is defined over the token vocabulary, $\mathcal{T}$ represents the transition dynamics induced by autoregressive token generation, and the reward function $r$ is defined over concatenations of the input query $x$ and the partially generated response.
The policy $\pi_\theta$ parameterized by $\theta$ aims to maximize the objective $J(\theta)$ defined as:
\begin{fontsize}{9}{1}
\begin{equation}\label{eq::objective}
\begin{aligned}
J(\theta)=  \underset{x\sim \mathcal{D}}{\mathbb{E}} & \Bigg[\underset{}{\mathbb{E}}^{}\left[ \sum\nolimits_{t} r( s_{t} ,y_{t}) ; \pi_{\theta} \right] \\ &\quad -\beta \ \mathrm{KL}\big( 
 \pi _{\theta } (\cdot | s_{t}) || \pi _{\mathrm{ref}} (\cdot |s_{t})\big)
 \Bigg],\\
\end{aligned}
\end{equation}
\end{fontsize}\noindent\ignorespaces
where $\mathcal{D}$ denotes the query distribution, and $y$ is the response generated by the language model $\pi_\theta$.
$\pi_{\text{ref}}$ is a reference policy, $\KL[]$ denotes the Kullback--Leibler (KL) divergence between two policies (e.g., $\pi_1$ and $\pi_2$), defined as follows:
\begin{fontsize}{9}{1}
\begin{equation}\label{eq_KL}
    \KL[]
        \big( \pi_1(\cdot|s) || \pi_2(\cdot|s) \big)
    \triangleq
    \sum_{a \in \mathcal{A}}
    \pi_1(a|s)
    \log \frac{\pi_1(a|s)}{\pi_2(a|s)},
\end{equation}
\end{fontsize}\noindent\ignorespaces
and $\beta$ controls the KL regularization to avoid policy drift~\citep{shao2024deepseekmath}.


\paragraph{Clipping in Policy Optimization Algorithms.}
Given a query $x$ and a generated response $y$, the clipped surrogate objective for the policy $\pi_\theta$ is defined as:
$$
\mathcal{L} (\theta)=
\mathop{\mathbb{E}}_{
    y \sim \pi_\theta(\cdot|x) \atop
    x \sim \mathcal{D}
}
\left[
\min\left( w_{t}(\theta )A_{t} ,\ {\mathrm{clip}}\left( w_{t} (\theta ),\ \cdot \ \right)A_{t}\right)
\right],
$$
where $w_t(\theta) = \frac{\pi_\theta(y_t \mid s_t)}{\pi_{\theta_{\text{old}}}(y_t \mid s_t)}$ represents the token-level likelihood ratio between $\pi_\theta$ and old policy $\pi_{\theta_{\text{old}}}$.
Here, the term $A_t$ denotes the advantage estimate, which corresponds to a group-normalized score in GRPO~\citep{shao2024deepseekmath} and the Generalized Advantage Estimator~\citep{schulman2015high} in PPO~\citep{proximal_policy_optimization}.
The clipping function $\mathrm{clip}(\cdot)$ constrains the likelihood ratio, thereby preventing excessively large policy updates and ensuring training stability.
Without loss of generality, we adopt the generalized clipping formulation proposed by \citep{wang2019trust}:
\begin{fontsize}{9.9pt}{1pt}
\begin{equation}
\mathrm{clip}^{\text{ratio}}\left( w_{t} (\theta ),l_{t}^{\text{}} ,u_{t}^{\text{}}\right) =\begin{cases}
w_{t} (\theta ), & l_{t}^{\text{}} \leq w_{t} (\theta )\leq u_{t}^{\text{}}\\
l_{t}^{\text{}} , & w_{t} (\theta )\leq l_{t}^{\text{}}\\
u_{t}^{\text{}} , & w_{t} (\theta )\geq u_{t}^{\text{}}
\end{cases}
\end{equation}
\end{fontsize}\ignorespacesafterend
where $\lowerClippingRange[t]$ and $\upperClippingRange[t]$ denote the lower and upper clipping ranges, respectively.
PPO and GRPO both use the symmetric clipping ranges $(l_t = 1 - \epsilon, u_t = 1 + \epsilon)$, while DAPO~\cite{yu2025dapo} uses the asymmetric clipping ranges $(l_t = 1 - \epsilon_l, u_t = 1 + \epsilon_u)$.

Despite the popularity of ratio-based clipping, \citeauthor{wang2020truly} identified a mismatch between the ratio-based and the KL-based constraints, and accordingly proposed an alternative KL-based clipping function:
\begin{equation}\label{eq_KL_based_clipping}
\mathrm{clip}^{\text{KL}}( \ratio (\theta ), \delta ) =
\begin{cases}
\ratio (\theta ), & \KL( \theta ) \leq \delta, \\
\ratio (\theta_{\text{old}} ), & \text{otherwise},
\end{cases}
\end{equation}
where $\delta$ is the trust-region threshold and $\KL(\theta) \triangleq  \KL\big( \pi_\theta( \cdot | s_t ) || \pi_{\theta_\text{old}}( \cdot | s_t ) \big) $ involves an expectation over the full action space (\Cref{eq_KL}). 
We distinguish loss functions induced by different clipping functions using subscripts.
For instance, the objectives corresponding to ratio-based and KL-based clipping are denoted by $\loss[ratio]$ and $\loss[KL]$, respectively.

\section{Method}\label{sec:our_method}

\subsection{A Unified Clipping Framework}

We first propose a unified framework that subsumes both ratio-based and KL-based policy constraints, while naturally accommodating more general constraint variants.
Specifically, we define a general clipping operator as
\begin{equation}\label{eq_general_clipping}
\mathrm{clip}_{\text{general}}( \ratio (\theta ),\constraint[][]) =\begin{cases}
\ratio (\theta ), & \text{if } \constraint( \theta ) \text{ is true}\\
\ratio (\theta _{\text{old}} ), & \text{otherwise}
\end{cases}
\end{equation}
where the constraint function $\constraint(\theta)$ encodes a prescribed feasibility condition on the policy corresponding to the sample $(s_t, a_t)$.
This formulation provides a flexible abstraction of policy divergence constraints: rather than committing to a specific measure, the operator admits arbitrary constraint functions that define how policy divergences are measured and restricted at the sample level.

Under this unified clipping framework, existing methods can be recovered as special cases by specifying different choices of the constraint function, as illustrated in~\Cref{table::clipping_comparison}.
In particular, when the constraint is set as
$\constraint[KL](\theta) \coloneqq \KL(\theta) \leq \delta$,
the resulting surrogate objective exactly recovers the original KL-constrained formulation, i.e.,
$\loss[KL]_{\rm general}(\theta) = \loss[KL](\theta)$, corresponding to an explicit trust-region constraint.
When the constraint is set as
$\constraint[ratio](\theta) \coloneqq l_t \leq \ratio \leq u_t$,
the resulting surrogate objective yields the same gradient as the standard ratio-based clipping objective.
As a result, this choice implicitly constrains policy updates without explicitly enforcing a trust-region constraint.
We formalize this equivalence in the following theorem.

\begin{theorem}[Gradient Equivalence]
\label{thm:gradient_equivalence}
Let the constraint be defined as $\constraint[ratio](\theta) \coloneqq l_t \leq \ratio(\theta) \leq u_t$. Then, for any parameter $\theta$ where the objective is differentiable, the gradient of the general objective is equivalent to that of the ratio-based objective:
$
    \nabla \loss[ratio](\theta) = \nabla \loss[ratio]_{\rm general}(\theta).
$
\end{theorem}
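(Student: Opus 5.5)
The proof is a per-sample case analysis. Both objectives are expectations over $(x,y)$ of token-level terms. The sampling distribution is treated as fixed at $\pi_{\theta_{\text{old}}}$, with importance weighting already absorbed into $\ratio(\theta)$, as is standard in the surrogate. Differentiation therefore passes inside the expectation, and it suffices to compare the gradients of the per-token integrands
$$f^{\rm ratio}_t(\theta)=\min\bigl(\ratio(\theta)A_t,\ \mathrm{clip}^{\rm ratio}(\ratio(\theta),l_t,u_t)A_t\bigr),$$
$$f^{\rm gen}_t(\theta)=\min\bigl(\ratio(\theta)A_t,\ \mathrm{clip}_{\rm general}(\ratio(\theta),\constraint[ratio])A_t\bigr).$$
Throughout, $\ratio(\theta_{\text{old}})=1$ is a constant in $\theta$, and $l_t\le 1\le u_t$.

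First case: the constraint $\constraint[ratio](\theta)$ holds, i.e. $l_t\le\ratio(\theta)\le u_t$. Then $\mathrm{clip}^{\rm ratio}(\ratio,l_t,u_t)=\ratio=\mathrm{clip}_{\rm general}(\ratio,\constraint[ratio])$. Both integrands equal $\ratio(\theta)A_t$, and both gradients are $A_t\nabla\ratio(\theta)$. On the boundary $\ratio=l_t$ or $\ratio=u_t$ we rely on the differentiability hypothesis: we only consider $\theta$ at which the objectives are differentiable, and there the one-sided descriptions agree.

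Second case: the constraint fails. Split by the sign of $A_t$ and by which side is violated.

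\emph{Subcase $\ratio>u_t$, $A_t>0$.} The ratio objective gives $\min(\ratio A_t,u_tA_t)=u_tA_t$, which is constant, so the gradient is $0$. The general objective gives $\min(\ratio A_t,1\cdot A_t)=A_t$, also constant with gradient $0$.

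\emph{Subcase $\ratio>u_t$, $A_t<0$.} The ratio objective gives $\min(\ratio A_t,u_tA_t)=\ratio A_t$, with gradient $A_t\nabla\ratio$. The general objective gives $\min(\ratio A_t,A_t)=\ratio A_t$, since $\ratio>u_t\ge1$. The gradients match.

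\emph{Remaining subcases ($\ratio<l_t$ with either sign of $A_t$).} These are symmetric. For $A_t<0$ both integrands are constant ($l_tA_t$ and $A_t$ respectively). For $A_t>0$ both equal $\ratio A_t$, using $\ratio<l_t\le1$.

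The $A_t=0$ case is trivial. Summing over tokens and taking expectations gives $\nabla\loss[ratio](\theta)=\nabla\loss[ratio]_{\rm general}(\theta)$.

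The main subtlety is that the function values differ: for example $u_tA_t$ versus $A_t$. The equivalence therefore holds only at the level of gradients, and the argument hinges on the outer $\min$ selecting the same active branch in each region. That in turn needs $l_t\le 1\le u_t$, so that the replacement value $\ratio(\theta_{\text{old}})=1$ lies on the same side of $\ratio(\theta)$ as the corresponding clipping bound. I would state this condition explicitly as an implicit assumption satisfied by all the clipping ranges under consideration.
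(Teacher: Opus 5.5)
The paper states this theorem without proof; its appendix proves only the three later theorems. There is therefore no argument of the authors' to compare against. Your per-token case analysis is correct and is the natural way to prove the result.

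It is also more careful than the paper on two points that genuinely matter.
\begin{itemize}
\item \emph{Sampling distribution.} The main text writes the surrogate with $y\sim\pi_\theta(\cdot\mid x)$. Read literally, differentiation would add score-function terms $\nabla\log\pi_\theta(y\mid x)\,f_t$. Since $f^{\rm ratio}_t$ and $f^{\rm gen}_t$ differ in value (e.g.\ $u_tA_t$ versus $A_t$), the gradients would then generally differ. Fixing the sampling distribution at $\pi_{\theta_{\mathrm{old}}}$, as the appendix's $\mathcal{J}^{\rm ratio}$ does, is therefore necessary, not merely convenient.
\item \emph{The condition $l_t\le 1\le u_t$.} This is really needed for the $\min$-form objective. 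If $u_t<w_t(\theta)<1$ and $A_t>0$, the ratio integrand is the constant $u_tA_t$, while the general one is $w_t(\theta)A_t$.
\end{itemize}
The surrogate the paper actually differentiates in its appendix drops the outer $\min$. For that version the result is immediate: both integrands equal $w_t(\theta)A_t$ on the feasible set and are constant off it, so no condition on $l_t,u_t$ is needed. Your argument thus covers the harder of the two readings.

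One sentence should be tightened. On the boundary, the ``one-sided descriptions'' do not agree in general.
\begin{itemize}
\item For $u_t>1$ and $A_t>0$, $f^{\rm gen}_t$ jumps from $u_tA_t$ to $A_t$ as $w_t(\theta)$ crosses $u_t$. The same happens at $l_t<1$ with $A_t<0$, where it jumps from $l_tA_t$ to $A_t$. At these points $f^{\rm ratio}_t$ has at most a kink.
\item These points are excluded once the hypothesis is read as differentiability of \emph{both} objectives. The one exception is when $w_t$ touches the bound but stays in $[l_t,u_t]$ nearby; then both integrands equal $w_t(\theta)A_t$ locally.
\item At the other two boundaries ($w_t=u_t$ with $A_t<0$, and $w_t=l_t$ with $A_t>0$), both integrands coincide with $w_t(\theta)A_t$ on a neighbourhood. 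Nothing needs to be assumed there.
\end{itemize}
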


\subsection{Analysis of $\KLThree[]$ Estimator and ATR-GRPO}
We have shown that both ratio-based and KL-based objectives are special cases of the unified formulation in \Cref{eq_general_clipping}.
Next, we investigate other policy divergence constraints that can be incorporated within this framework.
Previous literature has shown that the KL-based constraint
$
\KL[](\theta) \leq \delta
$ 
aligns with the original trust-region theory, which enforces monotonic policy improvement~\citep{trpo, wang2020truly, wang2019trust}.
However, in the context of LLMs, enforcing such a KL-based constraint is often intractable, as it requires computing the full expectation of the KL divergence over an extremely large action space.

To address this limitation, \citeauthor{schulman2020approximating} proposed several Monte-Carlo estimators to approximate the KL divergence.
Among them, the most widely used estimator is the $\KLThree$ operator, which admits a lightweight surrogate expression:
\begin{equation}\label{eq:kl3}
    \KLThree(\theta) \coloneqq \ratio(\theta) - 1 - \log \ratio(\theta).
\end{equation}
This estimator offers several appealing properties.
First, it can be computed at the sample level without requiring an explicit expectation over the entire action space, which is crucial for large-scale policies such as LLMs.
Second, $\KLThree$ is non-negative and exhibits substantially lower variance than naive Monte-Carlo KL estimators.
Finally, $\KLThree$ provides a local approximation to the KL divergence near the identity ratio, and thus serves as a principled trust-region surrogate~\citep{schulman2020approximating}.

Below, we analyze the relationship between different policy divergence constraints within the unified framework.
As implied by \Cref{eq:kl3}, $\KLThree[t](\theta)$ can be viewed as a function of the ratio $\ratio(\theta)$, and we show that the two constraints (likelihood ratio and KL divergence) are equivalent under appropriate hyperparameter choices.

\begin{theorem}[Equivalence and Asymmetry]\label{theorem_KL3}
Define lower and upper clipping ranges for a given threshold $\delta > 0$ as:
\[
\lowerClippingRange[\delta][\KLThree[]] = \min_{ \theta } \ratio(\theta) \quad \text{s.t.} \quad \KLThree(\theta) \leq \delta,
\]
\[
\upperClippingRange[\delta][\KLThree[]] = \max_{ \theta } \ratio(\theta) \quad \text{s.t.} \quad \KLThree(\theta) \leq \delta.
\]
(1) The constraint $\KLThree(\theta) \leq \delta$ is equivalent to $\lowerClippingRange[\delta][\KLThree[]] \leq \ratio(\theta) \leq \upperClippingRange[\delta][\KLThree[]]$, where $0 < \lowerClippingRange[\delta][\KLThree[]] < 1 < \upperClippingRange[\delta][\KLThree[]]$.

(2) These ranges satisfy the asymmetry property: $1 - \lowerClippingRange[\delta][\KLThree[]] < \upperClippingRange[\delta][\KLThree[]] - 1$.
\end{theorem}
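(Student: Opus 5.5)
The plan is to reduce both parts to elementary facts about the scalar function $f(w) = w - 1 - \log w$ on $w > 0$. The constraint depends on $\theta$ only through $w = \ratio(\theta)$. We treat the ratio as free to range over $(0,\infty)$, which is the implicit idealization in defining $\lowerClippingRange[\delta][\KLThree[]]$ and $\upperClippingRange[\delta][\KLThree[]]$ as an optimum over $\theta$. Under this reduction, $\lowerClippingRange[\delta][\KLThree[]]$ and $\upperClippingRange[\delta][\KLThree[]]$ are simply the minimum and maximum of the sublevel set $\{w>0 : f(w)\le\delta\}$.

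For part (1), I will first record the properties of $f$. We have $f'(w) = 1 - 1/w$ and $f''(w) = 1/w^2 > 0$, so $f$ is strictly convex with unique minimizer $w=1$ and $f(1)=0$. It is strictly decreasing on $(0,1]$ and strictly increasing on $[1,\infty)$. Moreover, $f(w)\to\infty$ both as $w\to 0^+$ (the $-\log w$ term dominates) and as $w\to\infty$ (the linear term dominates). By continuity and the intermediate value theorem, for each $\delta>0$ there is exactly one root $l\in(0,1)$ and exactly one root $u\in(1,\infty)$ of $f(w)=\delta$. Monotonicity on each side then gives $\{f\le\delta\} = [l,u]$. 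This interval is compact and nonempty, so the min and max exist and equal $l$ and $u$. That yields the equivalence and $0<l<1<u$.

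For part (2), I substitute $l = 1-a$ and $u = 1+b$ with $a\in(0,1)$ and $b>0$, so the goal is $a<b$. Since $f$ is increasing on $[1,\infty)$ and $f(1+b)=\delta$, it suffices to show $f(1+a) < f(1-a) = \delta$, because this forces $1+a < u$. Set $g(a) = f(1-a) - f(1+a)$. Using $f(1\pm a) = \pm a - \log(1\pm a)$, this gives
\[
g(a) = -2a + \log\frac{1+a}{1-a}.
\]
We have $g(0)=0$ and
\[
g'(a) = \frac{2}{1-a^2} - 2 = \frac{2a^2}{1-a^2} > 0 \quad \text{for } a\in(0,1).
\]
Equivalently, one can use the series $\log\frac{1+a}{1-a} = 2(a + a^3/3 + \cdots)$. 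Either way $g(a)>0$, which is the needed strict inequality.

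The only step needing care is the reduction from $\theta$ to $w$, that is, assuming $\ratio(\theta)$ sweeps all of $(0,\infty)$ (or at least $[l,u]$). I will state this explicitly as the interpretation of the definitions. After that reduction, the asymmetry comparison via $g$ is the main computation, and it is routine.
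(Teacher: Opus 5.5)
Your proof is correct, but it follows a different and more elementary route than the paper.

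For part (1), the paper rewrites $f(w)=\delta$ as $-we^{-w}=-e^{-1-\delta}$ and reads off the two roots from the real branches of the Lambert $W$ function: $l=-W_0(-e^{-1-\delta})\in(0,1)$ and $u=-W_{-1}(-e^{-1-\delta})\in(1,\infty)$. You instead get existence and uniqueness of the roots from strict monotonicity on each side of $w=1$ plus the intermediate value theorem. You also explicitly identify the sublevel set $\{f\le\delta\}$ with $[l,u]$, which is the actual content of the equivalence and which the paper leaves implicit. The paper's closed form is handy for computing the ranges numerically (e.g.\ $[0.671,1.422]$ at $\delta=0.07$), while your argument needs no special functions.

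For part (2), your route is stronger than the paper's. The paper argues from global slope behaviour: $|f'|\to\infty$ as $w\to 0^+$, and $|f'|<1$ for $w>1$. That alone does not compare $f$ where it matters. For small $\delta$ both roots lie near $1$, and a convex function with those same asymptotics can still be flatter just left of $1$ than just right of it. Your mirror function $g(a)=f(1-a)-f(1+a)$ supplies exactly the missing pointwise comparison, since $g'(a)=|f'(1-a)|-|f'(1+a)|=\frac{a}{1-a}-\frac{a}{1+a}=\frac{2a^2}{1-a^2}>0$. Your argument is therefore the rigorous version of the paper's intuition.

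Stating the reduction from $\theta$ to a free ratio $w\in(0,\infty)$ explicitly, as you do, is also appropriate. The paper makes the same idealization silently.
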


The proof of \Cref{theorem_KL3} is provided in~\Cref{appendix:theorem_KL3}.
\Cref{theorem_KL3} reveals an explicit connection between the KL3-based and ratio-based constraints by characterizing the admissible likelihood ratio ranges under the KL3 constraint, in the spirit of TRGPPO~\citep{wang2019trust}.
\textbf{Importantly, our goal here is to reveal this underlying relationship between these two types of constraints, instead of how to compute these clipping ranges explicitly.}
In practice, it is not necessary to explicitly compute the clipping ranges, but can directly restrict policy divergences through the specified clipping function $\constraint(\theta)$ in \Cref{eq_general_clipping}.

\begin{figure}[h]
    \centering
    \includegraphics[width=\linewidth]{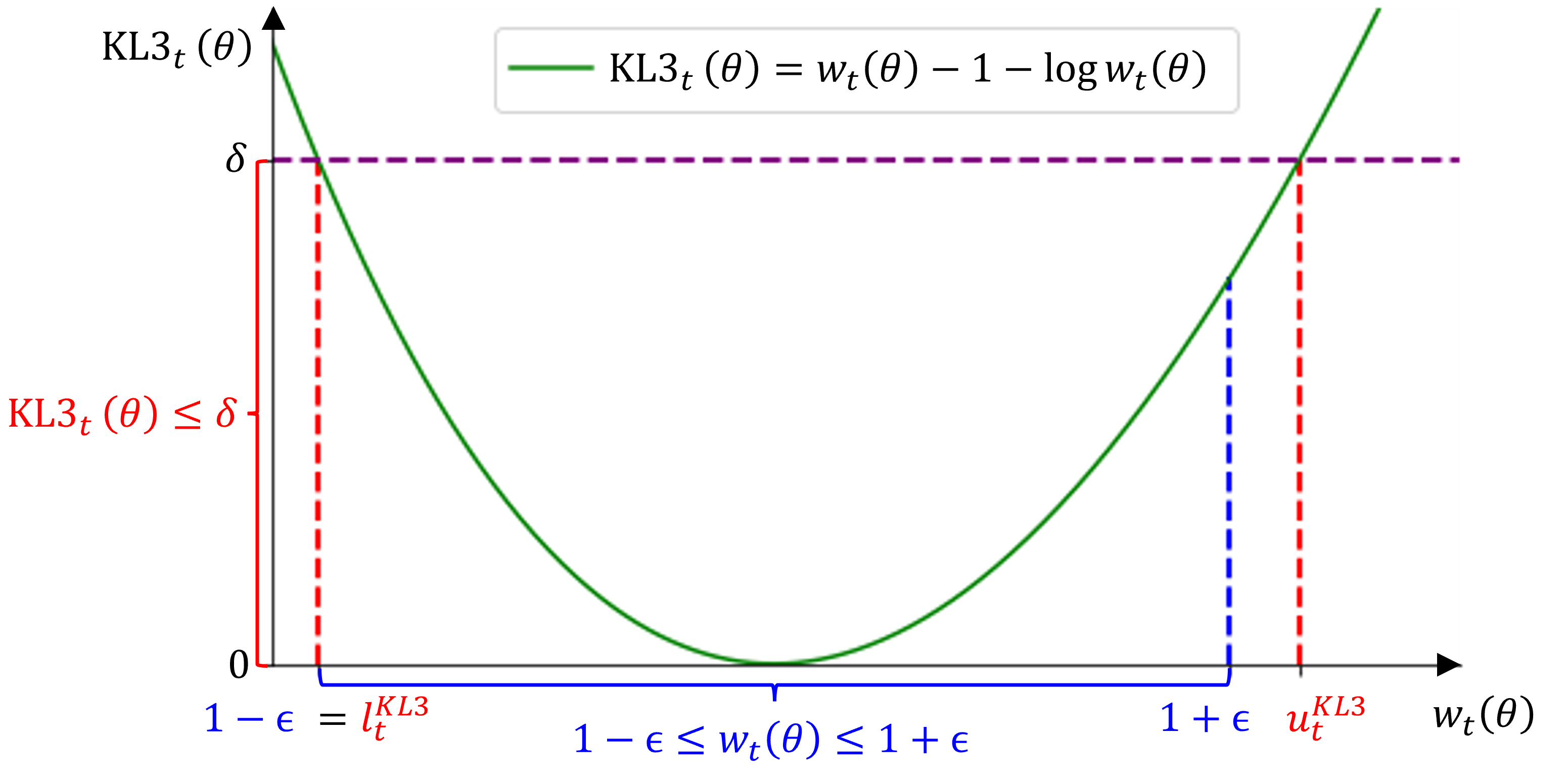}
    \caption{
    Illustration of the KL3-based constraint.
    \label{fig:clip_analysis_high}
    }
\end{figure}

As implied by \Cref{theorem_KL3} (1), the KL3-based constraint is equivalent to a ratio-based constraint with a specific choice of hyperparameters.
Furthermore, as implied by \Cref{theorem_KL3} (2), the KL3-based constraint always induces an asymmetric clipping range, where the upper clipping range deviates more than the lower range does ($1 - \lowerClippingRange[\delta][\KLThree[]] < \upperClippingRange[\delta][\KLThree[]] - 1$). 
Also, as illustrated in the \Cref{fig:clip_analysis_high}, the constraint $\KLThree(\theta) \leq \delta$ results in a larger upper clipping range compared to the symmetric clipping range $(1-\epsilon, 1+\epsilon)$, where we set $\epsilon = 1 - \lowerClippingRange[\delta][\KLThree[]]$.
This property is consistent with prior work, such as Clip-Higher \cite{yu2025dapo}.
However, unlike heuristic asymmetric clipping rules of Clip-Higher, the KL3-based constraint provides an exact and principled characterization of this asymmetry, thereby offering a theoretically grounded mechanism for tuning the clipping range.


Building upon these observations, we formally propose Approximate Trust Region-based GRPO (ATR-GRPO), which integrates the $\KLThree[]$ constraint directly into the unified framework(\Cref{eq_general_clipping}) by setting $\constraint(\theta):= \KLThree(\theta) \leq \delta$.
Different from GRPO, ATR-GRPO enforces the approximate trust-region constraint through the ATR-based clipping ($\ratio(\theta) \in [\lowerClippingRange[\delta][\KLThree[]], \upperClippingRange[\delta][\KLThree[]]]$) within the unified framework (\Cref{eq_general_clipping}), enabling stable policy updates without explicitly computing the full KL expectations.

\section{Theoretical Analysis}
In this section, we present a comprehensive theoretical analysis of ATR-based clipping and ratio-based clipping, building on the frameworks developed in prior works~\citep{cui2025entropy, park2025clip}. 
We formally characterize the induced policy logit differences (\Cref{theorem:performance_diff}) and analyze their impact on policy entropy difference (\Cref{theorem:entropy_diff}).

\subsection{Setup and Assumptions}
We assume that GRPO (symmetric ratio-based clipping) and ATR-GRPO (ATR-based clipping) both utilize the full-batch gradients under the standard policy gradient algorithm~\citep{williams1992simple}. 
Furthermore, the policy $\pi_{\theta}$ is modeled as a softmax policy $\pi_{\theta}(a|s) = \frac{\exp(\theta_{s, a})}{\sum_{a' \in \mathcal{A}(s)} \exp(\theta_{s, a})}$.
We mainly consider two representative cases, and we denote the corresponding probabilistic events as:
\begin{fontsize}{9}{1}
\begin{equation*}
\begin{aligned}
&X_{-}(s):=\{a\in \mathcal{A}(s)| \ratio(\theta_{s,a}) \in [1 - \epsilon, \lowerClippingRange[\delta][\KLThree[]]], 1 + \epsilon = \upperClippingRange[\delta][\KLThree[]]\},\\
&X_{+}(s):=\{a\in \mathcal{A}(s)| \ratio(\theta_{s,a}) \in [1 + \epsilon, \upperClippingRange[\delta][\KLThree[]]], 1 - \epsilon = \lowerClippingRange[\delta][\KLThree[]]\},\\
\end{aligned}
\end{equation*}
\end{fontsize}\ignorespacesafterend
where $X_{-}(s)$ and $X_{+}(s)$ represent the event of unsatisfying and satisfying the KL3 constraint (approximate trust-region constraint), respectively.
We define $\mathbb{I}_{X_{-}(s)}(a)$ and $\mathbb{I}_{X_{+}(s)}(a)$ to be the indicator function for $X_{-}(s)$ and $X_{+}(s)$, respectively.

\subsection{Policy Logits Difference Analysis}
Let $\theta^{k}_{s, a}$ denote the policy logits at the $k$-th step. We denote the updated policy logits for the ratio-based and ATR-based clipping methods as $\theta^{\text{ratio,}k+1}_{s, a}$ and $\theta^{\text{ATR}, k+1}_{s, a}$, respectively.
We define $\Delta \theta_{s, a}:= \theta^{\text{ATR}, k+1}_{s, a} - \theta^{\text{ratio,}k+1}_{s, a}$ as the policy logits difference between ATR-based clipping and the ratio-based clipping.
We characterize $\Delta \theta_{s, a}$ under different events in the following~\Cref{theorem:performance_diff} and provide proof in~\Cref{appendix:proof:performance_diff}.

\begin{theorem}[Policy Logits Difference]\label{theorem:performance_diff}
Consider the policy gradient algorithm with learning rate $\eta$ and given the state visitation distribution $d^{\pi_{\theta_\text{old}}(s)}$ induced by $\pi_{\theta_\text{old}}$. 
Let $\Delta \theta_{s, a}$ denote the policy logits difference between the ATR-based and ratio-based clipping methods. 

Under event $X_{-}(s)$, we have 
$$
\begin{aligned}
    \Delta \theta_{s, a} = - \eta d^{\pi_{\theta_\text{old}}(s)}\pi_{\theta^k}(a|s)\bigg[&A\mathbb{I}_{X_{-}(s)}(a) )\\
    &
        - 
        \mathop{\mathbb{E}}_{a' \sim \pi_{\theta^k}(\cdot|s)}
        [A\mathbb{I}_{X_{-}(s)}(a')]
    \bigg].\\
\end{aligned}
$$
Under event $X_{+}(s)$, we have
$$
\begin{aligned}
    \Delta \theta_{s, a} = \eta d^{\pi_{\theta_\text{old}}(s)}\pi_{\theta^k}(a|s)\bigg[&A\mathbb{I}_{X_{+}(s)}(a) \\
    & 
            - 
            \mathop{\mathbb{E}}_{a' \sim \pi_{\theta^k}(\cdot|s)}
            [A\mathbb{I}_{X_{+}(s)}(a')]
        \bigg].\\
\end{aligned}
$$
\end{theorem}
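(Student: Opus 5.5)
Both methods take one policy-gradient step from the same logits $\theta^k$ with learning rate $\eta$. So $\Delta\theta_{s,a}$ is just $\eta$ times the difference of the two gradients with respect to $\theta_{s,a}$. The plan is to write both gradients in the unified form of \Cref{eq_general_clipping}. By \Cref{thm:gradient_equivalence}, the ratio-clipped objective has the same gradient as $\loss[ratio]_{\rm general}$, in which a sample contributes $\nabla w_t(\theta)A$ when the constraint holds and $0$ otherwise, since $w_t(\theta_{\text{old}})$ is constant. Likewise, by \Cref{theorem_KL3}(1), the ATR objective is the general objective with feasibility set $[\lowerClippingRange[\delta][\KLThree[]],\upperClippingRange[\delta][\KLThree[]]]$. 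Each full-batch gradient then has the form
\[
\sum_s d^{\pi_{\theta_\text{old}}}(s)\sum_{a'}\pi_{\theta_\text{old}}(a'|s)\,\frac{\nabla\pi_\theta(a'|s)}{\pi_{\theta_\text{old}}(a'|s)}\,A\,\mathbb{I}_{\text{feasible}}(a'),
\]
which simplifies to $\sum_s d(s)\sum_{a'}\nabla\pi_\theta(a'|s)A\,\mathbb{I}(a')$.

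Next, take the difference of the two indicator sets. Under the event defining $X_{-}(s)$, the upper ranges coincide ($1+\epsilon=\upperClippingRange[\delta][\KLThree[]]$). The sets then differ only on ratios in $[1-\epsilon,\lowerClippingRange[\delta][\KLThree[]]]$, which are feasible for GRPO and infeasible for ATR. Hence $\mathbb{I}^{\rm ATR}-\mathbb{I}^{\rm ratio}=-\mathbb{I}_{X_-(s)}$. Symmetrically, under $X_{+}(s)$ the lower ranges coincide. Ratios in $[1+\epsilon,\upperClippingRange[\delta][\KLThree[]]]$ are feasible only for ATR, so $\mathbb{I}^{\rm ATR}-\mathbb{I}^{\rm ratio}=+\mathbb{I}_{X_+(s)}$. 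This accounts for the sign flip between the two displayed formulas.

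Finally, use the softmax Jacobian $\partial\pi_\theta(a'|s)/\partial\theta_{s,a}=\pi_\theta(a'|s)(\mathbb{I}[a'=a]-\pi_\theta(a|s))$, evaluated at $\theta^k$. Only the state $s$ contributes, which gives
\[
\sum_{a'}\pi_{\theta^k}(a'|s)\bigl(\mathbb{I}[a'=a]-\pi_{\theta^k}(a|s)\bigr)A\,\mathbb{I}_{X_\pm}(a')
=\pi_{\theta^k}(a|s)\Bigl[A\,\mathbb{I}_{X_\pm}(a)-\mathbb{E}_{a'\sim\pi_{\theta^k}}[A\,\mathbb{I}_{X_\pm}(a')]\Bigr].
\]
Multiplying by $\eta\,d^{\pi_{\theta_\text{old}}}(s)$ and by the sign from the previous step yields the claimed expressions.

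The main obstacle is bookkeeping rather than mathematics. Between the $\pi_{\theta_\text{old}}$ sampling weight and the $\pi_{\theta^k}$ appearing in the softmax derivative, the cancellation to a factor $\pi_{\theta^k}$ is exact only at $\theta^k=\theta_{\text{old}}$, or if one reads the expectation as taken under the current iterate. I would state that convention explicitly, following \citet{cui2025entropy}. I would also treat $A$ as a per-$(s,a')$ quantity inside the expectation, and note that samples exactly on the boundary have measure zero, so the indicators are well defined where the objective is differentiable.
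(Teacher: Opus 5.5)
Your proposal is correct and follows essentially the paper's argument: clipped samples contribute zero gradient, the two feasible ratio intervals differ exactly on $X_{-}(s)$ (with sign $-$) or $X_{+}(s)$ (with sign $+$), and the softmax Jacobian gives the centered expression; the only difference is bookkeeping, since the paper splits each gradient into six cases by the sign of $A$ and the position of $w_t(\theta)$ relative to the clipping range, instead of going through the unified operator. You should drop your closing caveat: the $1/\pi_{\theta_{\text{old}}}(a'|s)$ in the ratio cancels the $\pi_{\theta_{\text{old}}}(a'|s)$ sampling weight for every $\theta^k$, as your own first display shows and as the paper uses, whereas restricting to $\theta^k=\theta_{\text{old}}$ would make every ratio equal to $1$, so that $X_{\pm}(s)=\emptyset$ and the statement would be vacuous.
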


\subsection{Entropy Difference Analysis}
Building upon~\Cref{theorem:performance_diff}, we characterize the exploration behaviours of ATR-based clipping by deriving the entropy difference $\Delta \mathcal{H}:= \mathcal{H}(\theta^{\text{ATR}, k+1} | s) - \mathcal{H}(\theta^{\text{ratio,}k+1} | s)$ in the following theorem.
The proof is provided in~\Cref{appendix:proof:entropy_diff}.

\begin{theorem}[Entropy Difference]\label{theorem:entropy_diff}
The entropy difference $\Delta \mathcal{H}$ between the ATR-based and ratio-based clipping methods is related to the covariance between the advantage and the log-likelihood.

Under event $X_{-}(s)$, we have
$$
    \Delta \mathcal{H} = 
    \eta d^{\pi_{\theta_\text{old}}(s)}\mathop{Cov}_{a \sim \pi_{\theta^k}(\cdot|s)}\left(A\mathbb{I}_{X_{-}(s)}(a), \log \pi_{\theta^{k}}(a|s) \right).
$$
Under event $X_{+}(s)$, we have
$$
    \Delta \mathcal{H} = 
    - \eta d^{\pi_{\theta_\text{old}}(s)}\mathop{Cov}_{a \sim \pi_{\theta^k}(\cdot|s)}\left(A\mathbb{I}_{X_{+}(s)}(a), \log \pi_{\theta^{k}}(a|s) \right).
$$
\end{theorem}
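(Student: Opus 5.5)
We reduce the claim to a first-order expansion of the softmax entropy around $\theta^k$, using \Cref{theorem:performance_diff} as the only input. Write $\mathcal{H}(\theta\mid s) = -\sum_a \pi_\theta(a|s)\log\pi_\theta(a|s)$. Both updated parameter vectors $\theta^{\text{ATR},k+1}$ and $\theta^{\text{ratio},k+1}$ are $O(\eta)$ perturbations of $\theta^k$. A first-order Taylor expansion of $\mathcal{H}$ at $\theta^k$, taken along each update and then subtracted, therefore gives
\[
\Delta\mathcal{H} \approx \sum_{a}\frac{\partial \mathcal{H}(\theta^k\mid s)}{\partial \theta_{s,a}}\,\Delta\theta_{s,a},
\]
with an $O(\eta^2)$ remainder. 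As in the entropy analysis of \citep{cui2025entropy}, we state the result at first order in $\eta$.

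The first step is the standard softmax gradient identity. Using $\partial \pi(a')/\partial\theta_{s,a} = \pi(a')(\mathbb{1}[a'=a]-\pi(a))$ and $\sum_{a'}\partial\pi(a')/\partial\theta_{s,a}=0$, we get
\[
\frac{\partial\mathcal{H}}{\partial\theta_{s,a}} = -\pi(a)\bigl(\log\pi(a) - \mathbb{E}_{a'\sim\pi}[\log\pi(a')]\bigr),
\]
where $\pi=\pi_{\theta^k}(\cdot|s)$.

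The second step substitutes \Cref{theorem:performance_diff}. Under $X_{+}(s)$, write $\Delta\theta_{s,a} = c\,\pi(a)\bigl(g(a)-\mathbb{E}_\pi g\bigr)$ with $c=\eta d^{\pi_{\theta_\text{old}}}(s)$ and $g(a)=A\,\mathbb{I}_{X_+(s)}(a)$. Then
\[
\Delta\mathcal{H}\approx -c\sum_a \pi(a)\bigl(\log\pi(a)-\mathbb{E}\log\pi\bigr)\,\pi(a)\bigl(g(a)-\mathbb{E}g\bigr).
\]
The obstacle is the extra factor $\pi(a)$: naively it produces a weighted sum $\sum_a \pi(a)^2(\cdot)(\cdot)$ rather than a covariance.

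To handle this, I would adopt the convention of \citep{cui2025entropy}, where under the natural policy-gradient / tabular step the logit increment is $\eta\,\pi(a)A(a)$ in the sampled-gradient sense. The $\pi(a)$ in $\Delta\theta_{s,a}$ is then absorbed by the expectation over sampled actions: one writes $\Delta\theta_{s,a}$ as the expected per-sample update and uses the first-order relation $\Delta\log\pi(a)=\Delta\theta_{s,a}-\mathbb{E}_\pi\Delta\theta$. Equivalently, one uses the identity
\[
\Delta\mathcal{H}\approx-\mathrm{Cov}_{a\sim\pi}\bigl(\log\pi(a),\,\Delta\theta_{s,a}/\pi(a)\bigr)\cdot(\text{normalization}),
\]
which, with the paper's scaling, reduces to
\[
-c\,\mathrm{Cov}_{a\sim\pi}\bigl(\log\pi(a),\, g(a)\bigr).
\]
I expect this bookkeeping of the $\pi(a)$ factor to be the delicate point, and I would follow Lemma-style derivations in \citep{cui2025entropy} verbatim to match their normalization.

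The $X_{-}(s)$ case is identical with a sign flip, because \Cref{theorem:performance_diff} gives $\Delta\theta_{s,a}$ with the opposite sign and $\mathbb{I}_{X_-(s)}$ in place of $\mathbb{I}_{X_+(s)}$. This yields $+c\,\mathrm{Cov}\bigl(A\,\mathbb{I}_{X_-(s)}(a),\log\pi(a)\bigr)$. Finally, centering is free because covariance is invariant to constant shifts, so the $\mathbb{E}_{a'}[\cdot]$ terms from \Cref{theorem:performance_diff} drop out. This gives both displayed formulas to first order in $\eta$.
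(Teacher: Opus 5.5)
\textbf{There is a genuine gap: the extra factor $\pi(a)$ cannot be absorbed, so your argument does not reach the stated formula.}

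Your first two steps are correct and match the paper's opening step. The softmax identity $\partial\mathcal{H}/\partial\theta_{s,a}=-\pi(a)\bigl(\log\pi(a)-\mathbb{E}_\pi\log\pi\bigr)$ gives, to first order,
$\Delta\mathcal{H}\approx-\mathbb{E}_{a\sim\pi}\bigl[\Delta\theta_{s,a}\bigl(\log\pi(a)-\mathbb{E}_\pi\log\pi\bigr)\bigr]=-\mathrm{Cov}_{a\sim\pi}\bigl(\Delta\theta_{s,a},\log\pi(a)\bigr)$.
This is the paper's Taylor step, which it writes with $\log\pi+\mathcal{H}$.

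The gap is the step that makes the extra $\pi(a)$ from \Cref{theorem:performance_diff} disappear. The identity you invoke, with $\Delta\theta_{s,a}/\pi(a)$ and an unspecified normalization, is false. The first-order term of $\Delta\mathcal{H}$ is fixed by the formula above, so there is no room to rescale by $1/\pi(a)$. No constant turns $\mathbb{E}_\pi[\pi(a)XY]$ into $\mathbb{E}_\pi[XY]$ in general.

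Nor does \citep{cui2025entropy} license this step. Their vanilla policy-gradient result is $-\eta\,\mathrm{Cov}(\log\pi,\pi A)$, which keeps the factor. The $\pi$-free form $-\eta\,\mathrm{Cov}(\log\pi,A)$ is their natural-policy-gradient result, where the logit step itself is $\eta A$.

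Following them verbatim with the logit difference of \Cref{theorem:performance_diff}, and writing $c=\eta\,d^{\pi_{\theta_{\mathrm{old}}}}(s)$ and $g_\pm=A\,\mathbb{I}_{X_\pm(s)}$, you get
\[
\Delta\mathcal{H}\approx\mp c\,\mathrm{Cov}_{a\sim\pi}\bigl(\pi(a)\bigl(g_\pm(a)-\mathbb{E}_\pi g_\pm\bigr),\,\log\pi(a)\bigr)\quad\text{under }X_\pm(s),
\]
which is not the stated formula. Your closing remark about centering fails for the same reason. The term $\pi(a)\,\mathbb{E}_\pi g_\pm$ is not constant in $a$, so it does not drop out. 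It contributes $\pm c\,\mathbb{E}_\pi[g_\pm]\,\mathrm{Cov}_\pi\bigl(\pi(a),\log\pi(a)\bigr)$, which is nonzero whenever $\pi$ is non-uniform and $\mathbb{E}_\pi g_\pm\neq0$.

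The paper's proof takes exactly your route: Taylor expansion, subtraction of the two updates, then substitution of \Cref{theorem:performance_diff}. In its final line it simply writes the result as the stated covariance $\mp\eta\,d^{\pi_{\theta_{\mathrm{old}}}}(s)\,\mathrm{Cov}_\pi\bigl(A\,\mathbb{I}_{X_\pm(s)},\log\pi\bigr)$, dropping the same $\pi(a)$ factor without comment. So you found the real weak point, but the ``convention'' you propose does not close it.

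A sound version has two options:
\begin{itemize}
\item Keep the $\pi(a)$ weight in the statement, as in the display above.
\item Change the hypothesis so that the logit difference itself has the form $\pm c'\bigl(g_\pm(a)-\kappa\bigr)$, with $\kappa$ independent of $a$. A Fisher-preconditioned, natural-gradient step gives this for a softmax policy, with some scalar $c'$ in place of $c$. Only then does your computation yield $\mp c'\,\mathrm{Cov}_\pi(g_\pm,\log\pi)$, with the centering genuinely free.
\end{itemize}
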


\Cref{theorem:entropy_diff} implies that the entropy difference depends on the covariance between the advantage under the specific event and the log-likelihood of $\pi_{\theta^{k}}$.
Crucially, this dependency reveals that ATR-based clipping exhibits exploration behavior that is better aligned with the approximate trust-region constraint than ratio-based clipping.
We identify this as the dynamic exploration mechanism: \textbf{it enforces conservative exploration when updates are large and potentially risky ($X_{-}(s)$) while permitting expansive exploration when updates are within the approximate trust-region ($X_{+}(s)$)}.

\begin{table*}[t]
\centering
\caption{Performance comparison on the AMC2023, AIME2024, and AIME2025 benchmarks. We report both the final and best evaluation performance (formatted as final (best)). The best performance is highlighted.\label{table:qwen3_performance}}
\resizebox{0.97\textwidth}{!}{%
\begin{tabular}{lcccccccc}
\toprule
\toprule
\multirow{2.5}{*}{\textbf{Method}} & \multicolumn{2}{c}{AMC2023} & \multicolumn{2}{c}{AIME2024} & \multicolumn{2}{c}{AIME2025} & \multicolumn{2}{c}{Average} \\
\cmidrule(lr){2-3} \cmidrule(lr){4-5} \cmidrule(lr){6-7} \cmidrule(lr){8-9}
 & Mean@8 (\%) & Pass@8 (\%) & Mean@8 (\%) & Pass@8 (\%) & Mean@8 (\%) & Pass@8 (\%) & Mean@8 (\%) & Pass@8 (\%) \\
\midrule
\multicolumn{9}{c}{\textbf{\textit{Qwen3-1.7B}}} \\
\midrule
Base Model       & 28.61 & 48.19 & 6.67  & 13.33 & 4.17  & 20.00 & 13.15 & 27.18 \\
Clip
& 40.21 (40.21) & 60.24 (67.47) & 10.00 (10.00) & 23.33 (26.67) & 10.42 (12.50) & 20.00 (26.67) & 20.21 (20.41) & 34.52 (36.93) \\
Clip-Higher
& 37.65 (37.80) & 66.27 (66.27) & 10.42 (10.83) & 23.33 (30.00) & 6.25 (10.42)  & 16.67 (26.67) & 18.11 (18.68) & 35.42 (37.15) \\
Dual Clip
& 42.02 (42.02) & 65.06 (68.67) & 10.83 (11.67) & \best{30.00} (33.33) & 12.50 (13.33) & 23.33 (26.67) & 21.78 (21.78) & 39.46 (39.56) \\
Dynamic Clipping
& 41.72 (41.72) & 63.86 (67.47) & 9.17 (11.67)  & 20.00 (30.00) & 11.67 (12.92) & 23.33 (26.67) & 20.85 (21.56) & 35.73 (38.35) \\
Clip-Cov
& 33.58 (36.14) & 61.45 (67.47) & 7.92 (9.58)   & 20.00 (23.33) & 7.50 (10.83)  & 16.67 (23.33) & 16.33 (18.80) & 32.70 (34.93) \\
Soft Gate
& \best{42.92} ({42.92}) & {68.67} ({68.67}) & 8.33 (9.58)   & 26.67 (30.00) & 13.33 ({13.33}) & 23.33 (26.67) & 21.53 (21.53) & 39.56 (39.87) \\
ATR-based Clipping (ours)
& 41.72 (\best{45.03}) & \best{69.88} (\best{72.29}) & \best{13.33} (\best{13.33}) & {26.67} (\best{36.67}) & \best{13.75} (\best{14.58}) & \best{30.00} (\best{30.00}) & \best{22.93} (\best{23.07}) & \best{42.18} (\best{44.00}) \\
\midrule
\multicolumn{9}{c}{\textbf{\textit{Qwen3-8B}}} \\
\midrule
Base Model       & 26.05 & 50.60 & 5.42  & 16.67 & 1.25  & 6.67  & 10.91 & 24.65 \\
Clip
& 45.78 (48.34) & 72.29 (74.70) & 12.92 (18.33) & 26.67 (36.67) & 15.83 (15.83) & 30.00 (33.33) & 24.84 (26.81) & 42.99 (46.01) \\
Clip-Higher
& 53.77 (\best{56.33}) & 73.49 (78.31) & 20.83 (24.17) & 43.33 (46.67) & 19.58 (21.67) & 26.67 (30.00) & 31.39 (33.36) & 47.83 (51.66) \\
Dual Clip
& 55.57 (55.72) & 77.11 (78.31) & 21.25 (24.17) & 36.67 (46.67) & 19.58 (22.50) & 26.67 (33.33) & 32.14 (33.34) & 46.81 (49.75) \\
Dynamic Clipping
& 53.16 (54.97) & 78.31 ({78.31}) & 23.75 (24.58) & \best{50.00} (\best{50.00}) & \best{20.83} (\best{22.92}) & \best{33.33} (\best{36.67}) & 32.58 (32.93) & \best{53.88} (\best{53.88}) \\
Clip-Cov
& 50.75 (55.12) & 69.88 (75.90) & 21.25 (22.50) & 46.67 (46.67) & \best{20.83} (22.08) & \best{33.33} (33.33) & 30.95 (32.48) & 49.96 (50.05) \\
Soft Gate
& 53.01 (53.01) & 74.70 (74.70) & 20.42 (22.92) & 43.33 (43.33) & 19.17 (19.17) & 26.67 (30.00) & 30.87 (31.17) & 48.23 (48.23) \\
ATR-based Clipping (ours)
& \best{56.02} (56.02) & \best{80.72} (\best{80.72}) & \best{25.42} (\best{25.83}) & \best{50.00} (\best{50.00}) & {19.58} (\best{22.92}) & {30.00} (\best{36.67}) & \best{33.67} (\best{33.67}) & 53.57 (53.57) \\
\bottomrule
\bottomrule
\end{tabular}
}
\end{table*}

Specifically, when the approximate trust-region constraint is unsatisfied ($X_{-}(s)$), ATR-based clipping maintains higher entropy than ratio-based clipping when there is an alignment between the advantage and policy likelihood (i.e., advantageous and high-probability actions).
By preserving entropy in these high-confidence regions, our ATR-based clipping method prioritizes stability, anchoring the policy to high-confidence actions, preventing the premature amplification of low-confidence tails that lead to instability.

Conversely, when the approximate trust-region constraint is satisfied ($X_{+}(s)$), our method increases entropy when there is a misalignment between the advantage and policy likelihood (i.e., advantageous yet low-probability actions).
This regime drives aggressive exploration, actively reallocating probability mass toward promising but underexplored parts of the action space, improving efficiency without compromising the approximate trust-region constraints.

\section{Experiments}\label{sec:experiment}

\subsection{Experiment Setting}
We implement all experiments using the Unsloth~\citep{unsloth} and TRL~\citep{vonwerra2022trl} frameworks. 
We fine-tune the Qwen3-1.7B and Qwen3-8B models~\citep{qwen3} on the DAPO-Math-17k dataset~\citep{yu2025dapo}, employing a sparse binary reward function, where the model receives a reward of $+1$ upon generating the correct final answer, and $0$ otherwise.
To mitigate computational overhead, we employ Low-Rank Adaptation (LoRA)~\citep{schulman2025lora}, which injects low-rank adapter matrices to optimize a minimal set of parameters, greatly reducing memory requirements without compromising performance~\citep{schulman2025lora}.

We compare our approach against SOTA clipping methods, including Clip-Higher~\citep{yu2025dapo}, Dynamic Clipping~\citep{yang2025dcpo}, Clip-Cov~\citep{cui2025entropy}, and Soft Gate~\citep{gao2025soft}. To ensure a fair comparison, we standardize Dr.GRPO~\citep{liu2025understanding} as the backbone RL algorithm across all baselines, varying only the clipping mechanism.
Consistent with this setup, we denote our method (ATR-GRPO) as ATR-based clipping in the experiments to explicitly highlight our primary distinction.
We train each method for 1,000 gradient steps, evaluating performance every 50 steps on the AMC2023~\citep{li2024numinamath}, AIME2024~\citep{aime_1983_2024}, and AIME2025~\citep{aime2025} benchmarks.
The implementation details are provided in~\Cref{appendix:implementation_detail}.
\begin{figure*}[t]
    \centering
    \includegraphics[width=0.5\linewidth]{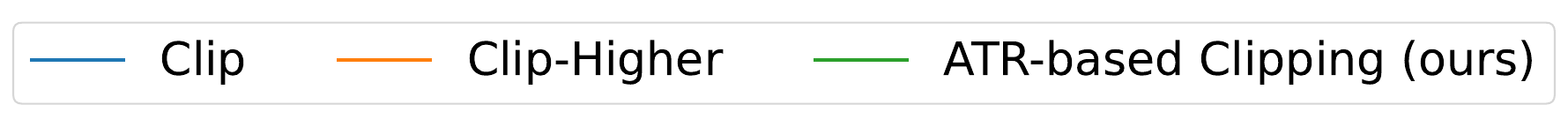}\\
    \subfigure[Return.\label{fig:reward}]{\includegraphics[width=0.24\linewidth]{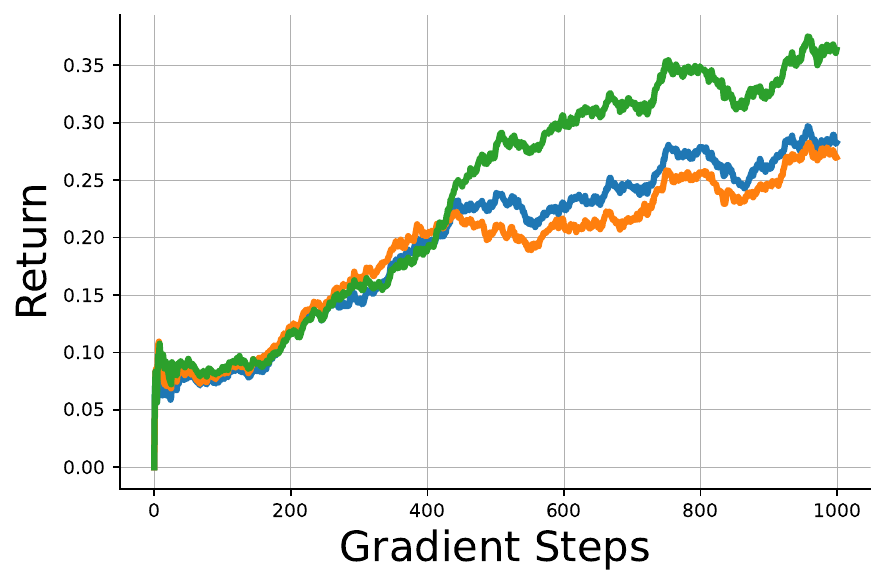}}
    \subfigure[Entropy.\label{fig:entropy}]{\includegraphics[width=0.24\linewidth]{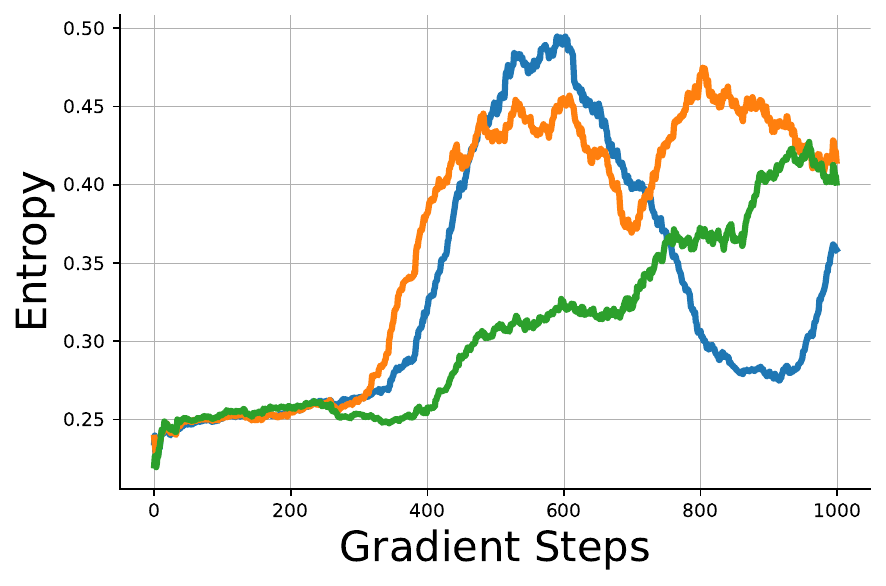}}
    \subfigure[Completion Length.\label{fig:completion_length}]{\includegraphics[width=0.24\linewidth]{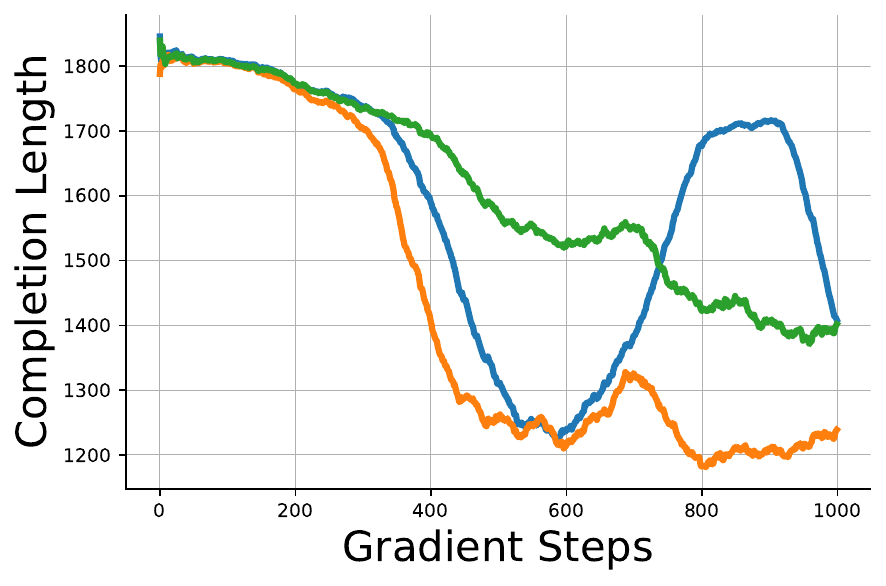}}
    \subfigure[Mean@8 (Average).\label{fig:reward_evaluation}]{\includegraphics[width=0.24\linewidth]{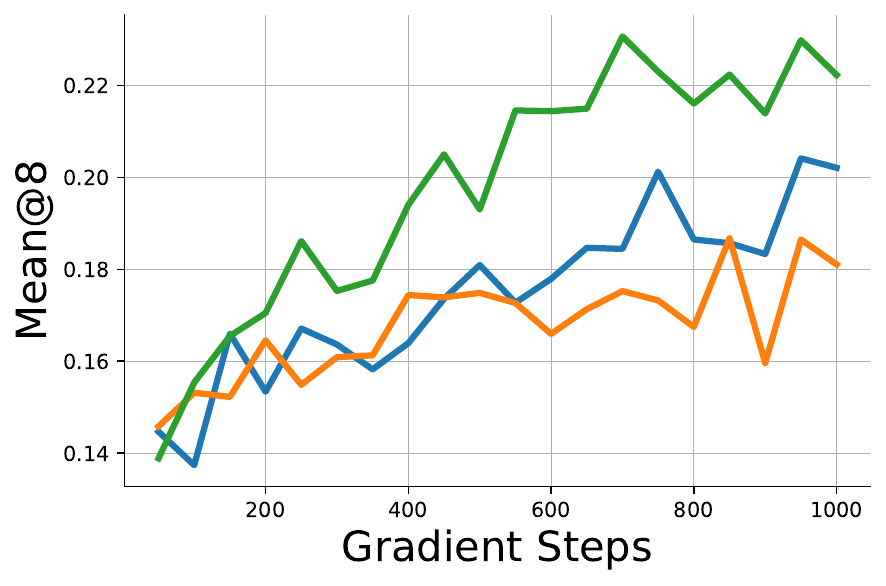}}
    \caption{Comparison of Clip, Clip-Higher, and ATR-based clipping on Qwen3-1.7B. The training curves for (a) return, (b) entropy, and (c) completion length are smoothed with a 100-step moving average window. (d) Evaluation performance of Mean@8 (Average).\label{fig:train_analysis}}
\end{figure*}

\subsection{Performance Comparison}
As shown in \Cref{table:qwen3_performance}, we conduct a comprehensive evaluation of the proposed ATR-based clipping method against SOTA baselines on the AMC2023, AIME2024, and AIME2025 benchmarks. 
We evaluate the performance of both Qwen3-1.7B and Qwen3-8B using Mean@8 and Pass@8. Mean@8 measures the average accuracy across 8 sampled responses, while Pass@8 indicates the success rate where at least one of the 8 samples is correct. 
We primarily report the evaluation performance of the \textbf{final} checkpoint, and additionally present the \textbf{best} evaluation performance achieved during training.

\paragraph{Performance on Qwen3-1.7B.}
Our proposed method achieves substantial improvements over the base model and consistently outperforms existing baselines on Qwen3-1.7B. Specifically, our method achieves the highest scores on the \textbf{final} evaluation performance (Average), with the Mean@8 of 22.93\% and Pass@8 of 42.18\%, surpassing the best baseline (Dual Clip). Specifically, for challenging AIME2025, our method achieves the \textbf{final} Mean@8 of 13.75\% compared to 13.33\% for Soft Gate, the best baseline.

\paragraph{Performance on Qwen3-8B.}
Our method continues to exhibit superior performance on Qwen3-8B, particularly on the AMC2023 and AIME2024. 
On AMC2023, our method achieves a \textbf{final} Mean@8 of 56.02\% and Pass@8 of 80.72\%.
Furthermore, on AIME2024, our approach demonstrates superior robustness, achieving a \textbf{final} Mean@8 of 25.42\% and Pass@8 of 50.00\%, outperforming Dynamic Clipping.
While Dynamic Clipping proves competitive on AIME2025, our method yields the highest \textbf{final} Mean@8 of 33.67\% (Average).

\subsection{Efficiency and Stability Analysis}
We now analyze the learning stability and efficiency of ATR-based clipping, comparing it primarily against the Clip and Clip Higher methods, the two baselines most closely related to and as computationally simple as ours. We present the training curves for return, entropy and completion length alongside evaluation performance (Mean@8) in~\Cref{fig:train_analysis}.

Specifically, as shown in~\Cref{fig:reward}, while the returns of all methods are initially similar, ATR-based clipping surpasses the baselines around $400$ gradient steps, continuing to improve to the return of $0.36$, whereas the baselines both plateau near $0.28$, demonstrating that our modification yields efficient performance improvements.
This trend is mirrored in the evaluation tasks (\Cref{fig:reward_evaluation}), where our method consistently outperforms baselines, confirming the superior sample efficiency of ATR-based clipping.

Moreover, our method does not sacrifice the learning stability. 
As presented in~\Cref{fig:entropy}, unlike the Clip method, ATR-based clipping maintains a steady, moderate entropy level, indicating a stable exploration strategy that avoids premature convergence or policy collapse.
In terms of the completion length (\Cref{fig:completion_length}), ATR-based clipping exhibits a stable, monotonic decrease. In contrast, the baselines show erratic oscillations (Clip) or sharp, potentially premature drops (Clip-Higher), showing our proposed method effectively stabilizes the learning process.

\subsection{Ablation Studies}

\paragraph{Trust Region Threshold $\delta$.}
Specifically, we show Mean@8 and Pass@8 with varying $\delta$ in~\Cref{fig:ablation_delta_mean_k} and~\Cref{fig:ablation_delta_pass_k}, respectively.
Both metrics exhibit a similar trend, steadily increasing from $\delta=0.05$ and  peaking at $\delta=0.07$.
However, further increasing $\delta$ results in sharp performance degeneration.
The results indicate that an overly small $\delta$ imposes stronger constraints that hinder performance improvement, while an excessively large $\delta$ leads to training instability or policy collapse, thereby degrading performance.
\begin{figure}[h]
    \centering
    \subfigure[Mean@8 with varying $\delta$.\label{fig:ablation_delta_mean_k}]{\includegraphics[width=0.45\linewidth]{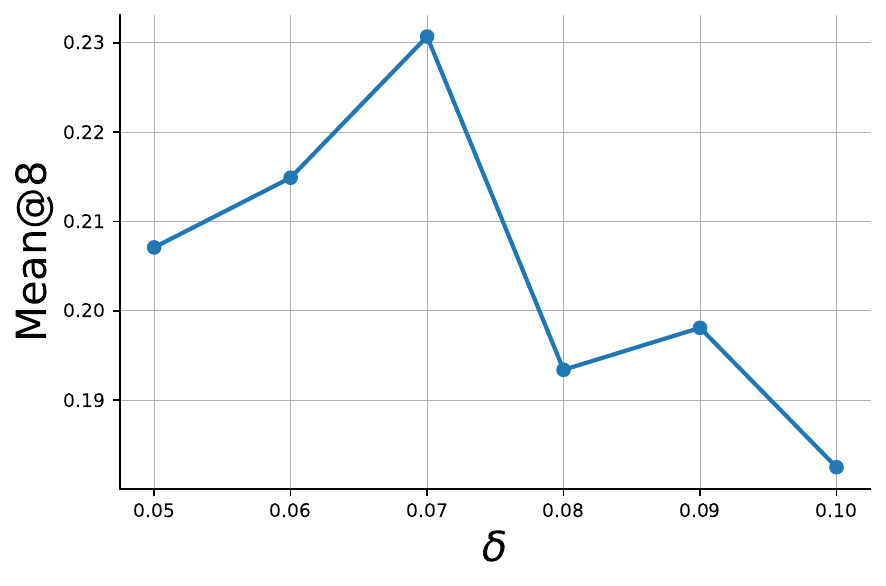}}
    \subfigure[Pass@8 with varying $\delta$.\label{fig:ablation_delta_pass_k}]{\includegraphics[width=0.45\linewidth]{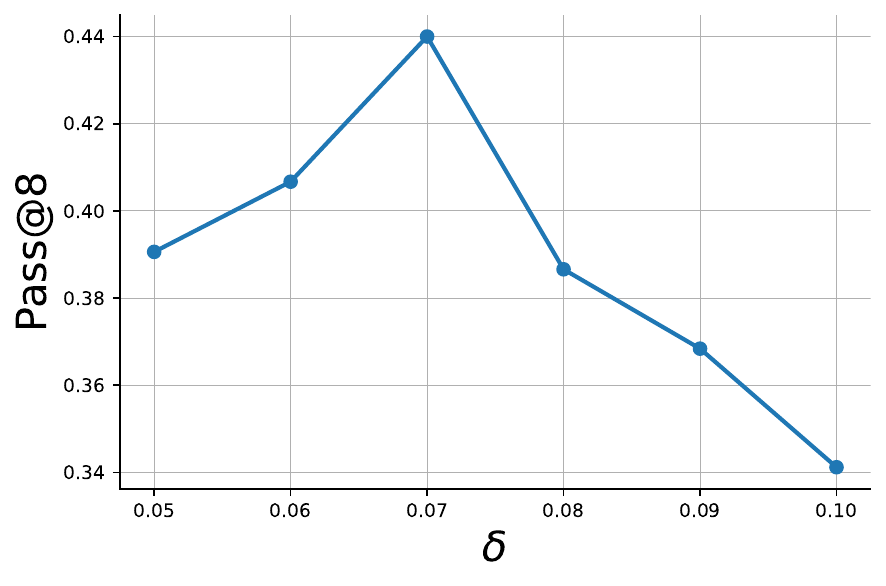}}
    \caption{Ablation experiments of ATR-GRPO on Qwen3-1.7B. The performance for (a) Mean@8 and (b) Pass@8 with varying $\delta$.\label{fig:ablation_analysis_delta}}
\end{figure}

\begin{table*}[t]
\centering
\caption{Performance comparison on Qwen3-1.7B across various KL estimators and fine-grained hyperparameter configurations for Clip and Clip-Higher. Each method is independently trained using 3 separate runs. We report the mean and standard deviation of the performance based on the final checkpoints from these 3 runs. The best performance is highlighted. \label{table:qwen3_performance_kl_estimator}}
\resizebox{0.97\textwidth}{!}{%
\begin{tabular}{lcccccccc}
\toprule
\toprule
\multicolumn{9}{c}{\textbf{\textit{Qwen3-1.7B}}} \\
\midrule
\multirow{2.5}{*}{\textbf{Method}} & \multicolumn{2}{c}{AMC2023} & \multicolumn{2}{c}{AIME2024} & \multicolumn{2}{c}{AIME2025} & \multicolumn{2}{c}{Average} \\
\cmidrule(lr){2-3} \cmidrule(lr){4-5} \cmidrule(lr){6-7} \cmidrule(lr){8-9}
 & Mean@8 (\%) & Pass@8 (\%) & Mean@8 (\%) & Pass@8 (\%) & Mean@8 (\%) & Pass@8 (\%) & Mean@8 (\%) & Pass@8 (\%) \\
\midrule
Clip ($\epsilon=0.2$) & $39.81_{\pm 0.26}$ & $61.45_{\pm 2.60}$ & $8.47_{\pm 0.86}$ & $26.67_{\pm 0.00}$ & $10.97_{\pm 0.39}$ & $22.22_{\pm 1.57}$ & $19.75_{\pm 0.10}$ & $36.78_{\pm 1.27}$ \\
Clip ($\epsilon \in \{0.1, 0.2, 0.3, 0.4, 0.5\}$) & $41.97_{\pm 0.14}$ & $65.86_{\pm 1.50}$ & $11.67_{\pm 1.02}$ & $27.78_{\pm 4.16}$ & \best{$14.17_{\pm 1.23}$} & \best{$28.89_{\pm 1.57}$} & $22.60_{\pm 0.63}$ & $40.84_{\pm 1.53}$ \\
Clip ($\epsilon=0.329$) & $40.16_{\pm 1.60}$ & $64.26_{\pm 3.98}$ & $11.81_{\pm 1.87}$ & $32.22_{\pm 1.57}$ & $11.25_{\pm 0.68}$ & $22.22_{\pm 1.57}$ & $21.07_{\pm 0.95}$ & $39.57_{\pm 2.33}$ \\
Clip ($\epsilon=0.422$) & $40.16_{\pm 0.14}$ & $63.05_{\pm 2.27}$ & $8.61_{\pm 1.19}$ & $24.44_{\pm 3.14}$ & $10.14_{\pm 0.39}$ & $23.33_{\pm 0.00}$ & $19.64_{\pm 0.38}$ & $36.94_{\pm 0.29}$ \\
Clip-Higher ($\epsilon_l=0.2, \epsilon_u=0.28$) & $36.75_{\pm 0.56}$ & $64.66_{\pm 3.16}$ & $9.17_{\pm 0.59}$ & $21.11_{\pm 1.57}$ & $8.47_{\pm 0.86}$ & $20.00_{\pm 2.72}$ & $18.13_{\pm 0.64}$ & $35.26_{\pm 2.10}$ \\
Clip-Higher ($\epsilon_l=0.329, \epsilon_u=0.5$) & $39.66_{\pm 0.61}$ & $61.85_{\pm 2.05}$ & $7.64_{\pm 0.39}$ & $22.22_{\pm 3.14}$ & $11.81_{\pm 0.20}$ & $23.33_{\pm 2.72}$ & $19.70_{\pm 0.24}$ & $35.80_{\pm 2.31}$ \\
Clip-Higher ($\epsilon_l=0.329, \epsilon_u=0.6$) & $42.72_{\pm 0.68}$ & $66.67_{\pm 1.50}$ & $10.83_{\pm 0.34}$ & \best{$27.78_{\pm 1.57}$} & $12.22_{\pm 0.39}$ & $23.33_{\pm 2.72}$ & $21.93_{\pm 0.21}$ & $39.26_{\pm 0.60}$ \\
\midrule
$\text{KL}1 (\delta=0.07)$ & $41.47_{\pm 1.40}$ & $65.86_{\pm 2.27}$ & $8.19_{\pm 0.20}$ & $24.44_{\pm 3.14}$ & $12.78_{\pm 1.19}$ & $24.44_{\pm 5.67}$ & $20.81_{\pm 0.77}$ & $38.25_{\pm 3.67}$ \\
$\text{KL}2 (\delta=0.07)$ & \best{$43.22_{\pm 1.23}$} & $67.07_{\pm 1.14}$ & $11.11_{\pm 1.37}$ & $26.67_{\pm 2.72}$ & $13.33_{\pm 0.90}$ & $24.44_{\pm 3.14}$ & $22.56_{\pm 0.66}$ & $39.39_{\pm 1.75}$ \\
Full KL-Guided Clipping $(\delta=0.07)$~\citep{wang2019trust} & $38.70_{\pm 0.44}$ & $62.65_{\pm 0.98}$ & $8.47_{\pm 1.37}$ & $23.33_{\pm 2.72}$ & $10.14_{\pm 0.86}$ & $21.11_{\pm 1.57}$ & $19.11_{\pm 0.65}$ & $35.70_{\pm 0.29}$ \\
IS-weighted \KLThreeText ($\delta=0.07$)& $39.26_{\pm 0.40}$ & $64.66_{\pm 1.50}$ & $8.75_{\pm 1.48}$ & $26.67_{\pm 4.71}$ & $11.67_{\pm 0.68}$ & $24.44_{\pm 1.57}$ & $19.89_{\pm 0.38}$ & $38.59_{\pm 1.98}$ \\
\midrule
\KLThreeText ($\delta=0.07, \lowerClippingRange[\delta][\KLThree[]]=0.671, \upperClippingRange[\delta][\KLThree[]]=1.422$) & {$43.17_{\pm 1.17}$} & \best{$68.67_{\pm 0.98}$} & \best{$13.06_{\pm 1.71}$} & \best{$27.78_{\pm 4.16}$} & {$14.03_{\pm 1.71}$} & \best{$28.89_{\pm 1.57}$} & \best{$23.42_{\pm 1.04}$} & \best{$41.78_{\pm 2.00}$} \\
\bottomrule
\bottomrule
\end{tabular}
}
\end{table*}
\paragraph{Fine-grained Tuning Clip and Clip-Higher.}
To ensure that our performance gains stem from fundamental algorithmic improvements rather than sub-optimal baseline configurations, we conducted extensive hyperparameter tuning on \textbf{Clip} and \textbf{Clip-Higher} baselines.
Specifically, we evaluated \textbf{Clip} with $\epsilon \in \{0.1, 0.2, 0.3, 0.4, 0.5\}$, reporting the performance achieved by the optimal configuration.
%
%
Additionally, for the trust region threshold $\delta=0.07$, we have corresponding ATR-based clipping range of $[\lowerClippingRange[\delta][\KLThree[]], \upperClippingRange[\delta][\KLThree[]]]=[0.671, 1.422]$.
We specifically configured Clip and Clip-Higher using these two ranges, including \textbf{Clip} ($\epsilon\in \{0.329, 0.422\}$) and \textbf{Clip-Higher} ($\epsilon_l=0.329$, $\epsilon_u\in \{0.5, 0.6\}$).
\textbf{As shown in \Cref{table:qwen3_performance_kl_estimator}, our method achieves the highest performance of Mean@8 and Pass@8 (Average) compared to these meticulously tuned baselines. Crucially, our method outperforms \textbf{Clip} ($\epsilon = 0.329$ and $\epsilon = 0.422$), validating our previous statement and theoretical analysis (\Cref{theorem:entropy_diff}).}

\paragraph{KL Estimators.}
Furthermore, we evaluate alternative KL estimators (KL1, KL2, Full KL-guided clipping, and IS-weighted KL3) under the same trust region threshold ($\delta=0.07$), as presented in~\Cref{table:qwen3_performance_kl_estimator}.
The results demonstrate that \KLThreeText consistently yields the performance gains.
While recent studies~\Citep{shah2025comedy, liu2025rethinking, zhang2025design, liu2025deepseek} have explored KL estimators primarily as loss regularization terms or auxiliary reward signals, we investigate a different application, utilizing the KL estimator to guide the clipping criterion.
We empirically demonstrate that \KLThreeText is the best choice among these KL estimators for this specific purpose.

\paragraph{Test-time Sample Budget.}
Then, we further report Mean@K (\Cref{fig:ablation_mean_varying_k}) and Pass@K (\Cref{fig:ablation_pass_varying_k}) for ATR-GRPO on Qwen3-1.7B with varying K. 
As K increases, Mean@K remains stable around $0.23$.
In contrast, Pass@K exhibits an increasing trend, scaling with the test-time sample budget from 24.36\% at Pass@1 to 56.29\% at Pass@128.
\subsection{Limitations and Future Works}

\paragraph{Adaptive Trust Region Threshold.}
Employing a static trust region threshold throughout the entire training process may be suboptimal.
Inspired by existing works~\citep{wang2019trust, yang2025dcpo}, we will investigate mechanisms to adaptively adjust $\delta$ based on the policy probability and entropy for more stable and efficient learning in the future.

\paragraph{Sequence-level Integration.}
While this work primarily focuses on the token-level objective, we recognize the limitation posed by the mismatch between token-level importance sampling and sequence-level rewards, leading to high-variance gradients and unstable training~\citep{zheng2025group}. We aim to explore the extension of our framework to sequence-level objectives in future work.

\begin{figure}[t]
    \centering
    \subfigure[Mean@K with varying K.\label{fig:ablation_mean_varying_k}]{\includegraphics[width=0.45\linewidth]{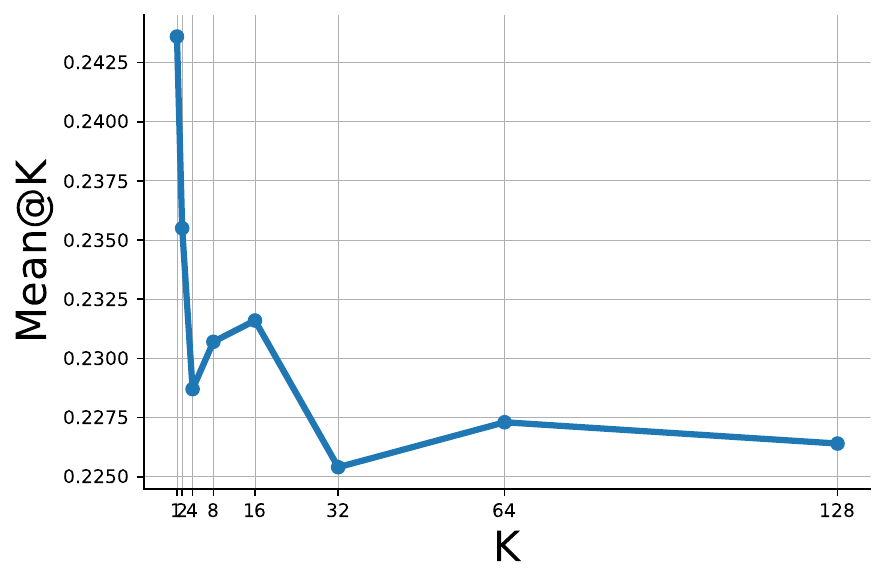}}
    \subfigure[Pass@K with varying K.\label{fig:ablation_pass_varying_k}]{\includegraphics[width=0.45\linewidth]{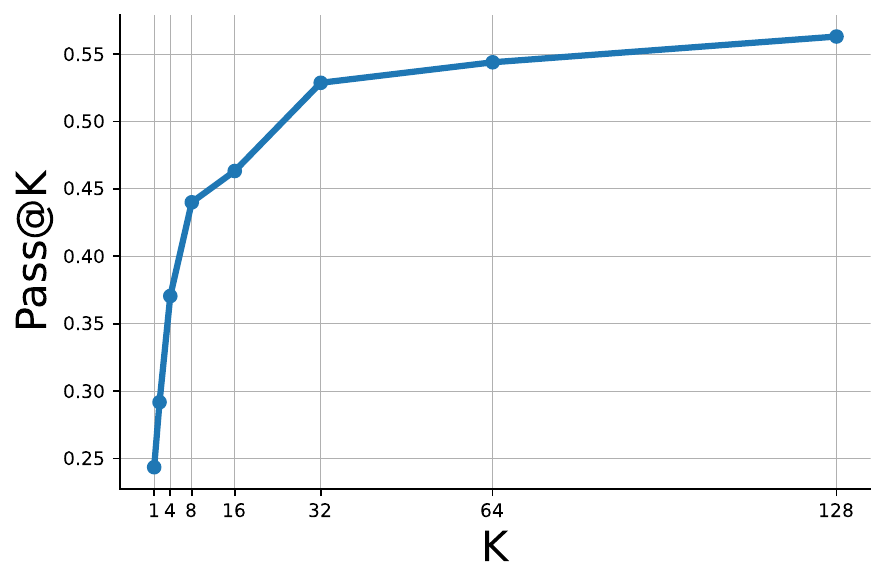}}
    \caption{The performance (Average) of ATR-GRPO on Qwen3-1.7B for (a) Mean@K and (b) Pass@K with varying K.\label{fig:ablation_analysis_k}}
\end{figure}
\section{Conclusion}\label{sec:conclusion}
In this work, we introduce a unified clipping framework for policy optimization that generalizes the notion of policy divergence, encompassing both ratio–based and KL-based constraints.
From this foundational perspective, we analyze and identify the \KLThreeText estimator as a particularly effective policy divergence measure. 
Based on this insight, we proposed ATR-GRPO that actively reallocates probability mass toward promising actions rather than passively truncating updates through ratio-based clipping. 
Our theoretical analysis demonstrates that ATR-GRPO enables more principled and effective exploration than standard symmetric ratio-based clipping, while preserving the computational efficiency of GRPO.
Empirically, comprehensive experiments across multiple verifiable mathematical reasoning benchmarks demonstrate that ATR-GRPO yields performance improvements in both training stability and final performance compared to existing baselines. 
These results underscore the central role of the policy divergence constraint in policy optimization and suggest that exploring alternative policy divergence measures is a promising direction for advancing LLMs.

\clearpage

\bibliography{example_paper}
\bibliographystyle{icml2026}

\newpage
\appendix
\crefalias{section}{appendix}
\crefalias{subsection}{appendix}
\onecolumn

\section{Proof Details}

\subsection{Equivalence and Asymmetry Analysis}\label{appendix:theorem_KL3}
\begin{theorem}[Equivalence and Asymmetry]
Define the lower and upper clipping ranges for a given threshold $\delta > 0$ as:
\[
\lowerClippingRange[\delta][\KLThree[]] = \min_{ \theta } \ratio(\theta) \quad \text{s.t.} \quad \KLThree(\theta) \leq \delta,
\]
\[
\upperClippingRange[\delta][\KLThree[]] = \max_{ \theta } \ratio(\theta) \quad \text{s.t.} \quad \KLThree(\theta) \leq \delta.
\]
(1) The constraint $\KLThree(\theta) \leq \delta$ is equivalent to $\lowerClippingRange[\delta][\KLThree[]] \leq \ratio(\theta) \leq \upperClippingRange[\delta][\KLThree[]]$, where $0 < \lowerClippingRange[\delta][\KLThree[]] < 1 < \upperClippingRange[\delta][\KLThree[]]$.

(2) These ranges satisfy the asymmetry property: $1 - \lowerClippingRange[\delta][\KLThree[]] < \upperClippingRange[\delta][\KLThree[]] - 1$.
\end{theorem}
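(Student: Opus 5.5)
The plan is to reduce everything to the scalar function $f(w) = w - 1 - \log w$ on $(0,\infty)$. By \Cref{eq:kl3}, $\KLThree(\theta) = f(\ratio(\theta))$, so the constraint $\KLThree(\theta)\le\delta$ is just the sublevel set $\{w>0 : f(w)\le\delta\}$ pulled back through $\ratio$. I would treat $\ratio(\theta)$ as ranging over all of $(0,\infty)$ as $\theta$ varies; this is the implicit assumption behind the min/max in the statement, and it holds for a softmax policy with free logits. Under it, $\lowerClippingRange[\delta][\KLThree[]]$ and $\upperClippingRange[\delta][\KLThree[]]$ are the endpoints of this sublevel set.

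For part (1), compute $f'(w) = 1 - 1/w$ and $f''(w) = 1/w^2 > 0$. So $f$ is strictly convex, strictly decreasing on $(0,1)$ and strictly increasing on $(1,\infty)$, with $f(1)=0$. Moreover $f(w)\to\infty$ both as $w\to 0^+$ (because of $-\log w$) and as $w\to\infty$ (because of $w$). By continuity and strict monotonicity on each branch, for every $\delta>0$ there is a unique $l\in(0,1)$ and a unique $u\in(1,\infty)$ with $f(l)=f(u)=\delta$, and $\{f\le\delta\}=[l,u]$. Hence the minimum and maximum of $\ratio$ over the feasible set are attained and equal $l$ and $u$. This gives both the equivalence and the ordering $0<l<1<u$.

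For part (2), set $l = 1-a$ and $u = 1+b$ with $a\in(0,1)$ and $b>0$. We need $a<b$. Since $f$ is increasing on $(1,\infty)$, it suffices to show $f(1+a) < \delta = f(1-a)$, because then $1+a < u$. So the claim reduces to showing
\[
g(a) := f(1-a) - f(1+a) = -2a - \log(1-a) + \log(1+a) > 0 \quad \text{for } a\in(0,1).
\]
We have $g(0)=0$ and
\[
g'(a) = -2 + \frac{1}{1-a} + \frac{1}{1+a} = \frac{2a^2}{1-a^2} > 0 .
\]
Alternatively, this follows from the series $\log\frac{1+a}{1-a} = 2(a + a^3/3 + \dots) > 2a$. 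Either way $g(a)>0$, which gives $1-l = a < b = u-1$.

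The main obstacle is less the calculus than the framing. One has to justify that the optimization over $\theta$ in the definition reduces to optimization over $w\in(0,\infty)$, i.e. that every ratio value is attainable. I would state this surjectivity assumption explicitly, or note that it holds because the softmax logits are unconstrained. If only a subset of ratios were attainable, the bounds would still be enclosed by $[l,u]$, but the exact equivalence could fail. The symmetric-comparison step in part (2) is the only genuinely non-obvious inequality, and the derivative computation above should settle it.
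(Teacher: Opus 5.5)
Your proof is correct, and it takes a different and tighter route than the paper's.

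For (1), the paper rewrites $w-1-\log w=\delta$ as $(-w)e^{-w}=-e^{-1-\delta}$ and reads the two roots off the real Lambert branches: $l=-W_0(-e^{-1-\delta})\in(0,1)$ and $u=-W_{-1}(-e^{-1-\delta})\in(1,\infty)$. What that buys is closed forms; this is how the range $[0.671,1.422]$ at $\delta=0.07$ used in the experiments is obtained. However, the paper only identifies the roots. It leaves implicit that $\{f\le\delta\}$ is exactly the interval between them, which follows from the convexity it computes later. Your monotonicity, limit and intermediate-value argument establishes this directly.

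For (2), the paper reasons only from $|f'(w)|\to\infty$ as $w\to0^+$ and $|f'(w)|<1$ on $(1,\infty)$. That alone does not yield the inequality. Both slopes vanish at $w=1$, and a strictly convex function with those two properties can be flatter just left of $1$ than just right of it. Your auxiliary function supplies exactly the missing pointwise comparison at symmetric points, $g'(a)=\frac{a}{1-a}-\frac{a}{1+a}=|f'(1-a)|-|f'(1+a)|>0$. So your argument is the rigorous form of the paper's slope intuition.

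One correction to your framing: free softmax logits do not make $w$ surjective onto $(0,\infty)$. Since $\pi_\theta(a_t|s_t)<1$, you always have $w<1/\pi_{\theta_{\mathrm{old}}}(a_t|s_t)$. The theorem has to be read, as the paper implicitly does, with $w$ a free variable on $(0,\infty)$, so that $l,u$ are the two roots of $f=\delta$. Under that reading, $f(w(\theta))\le\delta\iff l\le w(\theta)\le u$ holds for every $\theta$ whatever the range of $w$ is. So the equivalence itself is not at risk, contrary to your closing remark.

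What breaks under the literal min/max-over-$\theta$ reading is the identification of those extrema with the roots. With free softmax logits, if $\pi_{\theta_{\mathrm{old}}}(a_t|s_t)\ge 1/u$, the feasible ratios form $[l,1/\pi_{\theta_{\mathrm{old}}}(a_t|s_t))$, so the maximum is not attained. With the supremum in its place, the asymmetry in (2) fails once $\pi_{\theta_{\mathrm{old}}}(a_t|s_t)\ge 1/(2-l)$, which is about $0.75$ at $\delta=0.07$.
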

\begin{proof}
For a given trust region threshold $\delta > 0$, we aim to find the area $(0<\lowerClippingRange[\delta][\KLThree[]] < 1 < \upperClippingRange[\delta][\KLThree[]])$ such that $\KLThree(\theta) \leq \delta, \forall \ratio(\theta) \in [\lowerClippingRange[\delta][\KLThree[]], \upperClippingRange[\delta][\KLThree[]]]$.

Recall that
\begin{equation}
    \KLThree(\theta) \coloneqq \ratio(\theta) - 1 - \log \ratio(\theta)
\end{equation}

We have
\begin{equation}
    \begin{aligned}
        \ratio(\theta) - 1 - \log \ratio(\theta) = \delta & \\
        \log \ratio(\theta) = \ratio(\theta) - 1 - \delta & \\
        \ratio(\theta) = e^{\ratio(\theta) - 1 - \delta} & \\
        \ratio(\theta)e^{-\ratio(\theta)} = e^{- 1 - \delta} & \\
        -\ratio(\theta)e^{-\ratio(\theta)} = -e^{- 1 - \delta} & \\
    \end{aligned}
\end{equation}

The Lambert $W$ function is defined as $W(z) e^{W(z)} = z$. 
Then, let $\ratio(\theta) = -W(z)$ and $z=-e^{- 1 - \delta}$, we have
\begin{equation}
    \begin{aligned}
        & \ratio(\theta) = -W(-e^{- 1 - \delta}) \\
    \end{aligned}
\end{equation}

The equation has two real-valued Lambert solutions corresponding to the two real branches $W_{0}$ and $W_{-1}$ for $\lowerClippingRange[\delta][\KLThree[]]$ and $\upperClippingRange[\delta][\KLThree[]]$, respectively.
Specifically, we have
\begin{equation}
    \begin{aligned}
        & \lowerClippingRange[\delta][\KLThree[]] = -W_{0}(-e^{- 1 - \delta}) \in (0, 1)\\
        & \upperClippingRange[\delta][\KLThree[]] = -W_{-1}(-e^{- 1 - \delta}) \in (1, +\inf)\\
    \end{aligned}
\end{equation}

Then, we show that we always have $1 - \lowerClippingRange[\delta][\KLThree[]] < \upperClippingRange[\delta][\KLThree[]] - 1$.

We have
\begin{equation}
\begin{split}
    \frac{d}{d\ratio(\theta)}\KLThree(\theta)=&1-\frac{1}{\ratio(\theta)},\\ 
    \frac{d^2}{d(\ratio(\theta))^2}\KLThree(\theta)=&\frac{1}{(\ratio(\theta))^2}>0.\\
\end{split}
\end{equation}

Specifically, we have
\begin{equation}
\left|\frac{d}{d\ratio(\theta)}\KLThree(\theta)\right| = 
\begin{cases}
\frac{1}{\ratio(\theta)}-1, & \ratio(\theta)<1,\\
1-\frac{1}{\ratio(\theta)}, & \ratio(\theta)>1,
\end{cases}
\end{equation}

We can observe that as $\ratio(\theta) \to 0^+$, the slope magnitude 
$\left|\frac{d}{d\ratio(\theta)}\KLThree(\theta)\right|$ diverges to infinity,
whereas for $\ratio(\theta)>1$ it is uniformly bounded by $1$.
This implies that $\text{kl}_3(\ratio(\theta))$ increases more rapidly for
$\ratio(\theta)<1$ than for $\ratio(\theta)>1$.

Recall that we have $\lowerClippingRange[\delta][\KLThree[]]<1<\upperClippingRange[\delta][\KLThree[]]$.
Since $\KLThree(\theta)$ is strictly convex on $(0,\infty)$ and satisfies
$\left|\KLThree'(\theta)\right| \to \infty$ as $\ratio(\theta)\to0^+$ while
$\left|\KLThree'(\theta)\right|<1$ for $\ratio(\theta)>1$,
the function grows faster for $\ratio(\theta)<1$ than for $\ratio(\theta)>1$.
Therefore, to reach the same divergence level $\delta$, the solution
above $1$ must deviate farther from $1$ than the solution below $1$,
which implies that we always have
\begin{equation}
1-\lowerClippingRange[\delta][\KLThree[]]
<
\upperClippingRange[\delta][\KLThree[]]-1.
\end{equation}

\end{proof}

\subsection{Policy Logits Difference Analysis}
\label{appendix:proof:performance_diff}

\begin{theorem}[Policy Logits Difference]
Consider the policy gradient algorithm with learning rate $\eta$ and given the state visitation distribution $d^{\pi_{\theta_\text{old}}(s)}$ induced by $\pi_{\theta_\text{old}}$. 
Let $\Delta \theta_{s, a}$ denote the policy logits difference between the ATR-based and ratio-based clipping methods. 

Under event $X_{-}(s)$, we have 
$
\begin{aligned}
    &\Delta \theta_{s, a} = - \eta d^{\pi_{\theta_\text{old}}(s)}
        \pi_{\theta^k}(a|s)
        \left[
            A\mathbb{I}_{X_{-}(s)}(a) 
            - 
            \mathop{\mathbb{E}}_{a' \sim \pi_{\theta^k}(\cdot|s)}
            [A\mathbb{I}_{X_{-}(s)}(a')]
        \right].\\
\end{aligned}
$

Under event $X_{+}(s)$, we have
$
\begin{aligned}
    &\Delta \theta_{s, a} = \eta d^{\pi_{\theta_\text{old}}(s)}
        \pi_{\theta^k}(a|s)
        \left[
            A\mathbb{I}_{X_{+}(s)}(a) 
            - 
            \mathop{\mathbb{E}}_{a' \sim \pi_{\theta^k}(\cdot|s)}
            [A\mathbb{I}_{X_{+}(s)}(a')]
        \right].\\
\end{aligned}
$
\end{theorem}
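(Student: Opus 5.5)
The proof is a direct one-step comparison of the two policy-gradient updates from the common iterate $\theta^k$. Since both methods start from the same logits, $\Delta\theta_{s,a}$ is $\eta$ times the difference of their gradients with respect to $\theta_{s,a}$. The first step is therefore to write both gradients explicitly. By \Cref{thm:gradient_equivalence}, the ratio-based objective and the ATR-based objective can both be treated through the unified operator \Cref{eq_general_clipping}. For each sampled $(s,a)$ the surrogate is either $\ratio(\theta)A$ (constraint satisfied, gradient flows) or the constant $\ratio(\theta_{\rm old})A$ (constraint violated, zero gradient).

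Under the full-batch, softmax assumptions, a sample whose constraint is active contributes $d^{\pi_{\theta_{\rm old}}}(s)\,A\,\nabla_\theta \pi_\theta(a|s)$. I will identify the importance weight with the sampling distribution so that this reads $d^{\pi_{\theta_{\rm old}}}(s)\,\pi_{\theta^k}(a|s)\,A\,\nabla_\theta\log\pi_{\theta^k}(a|s)$. I then use the softmax identity $\partial_{\theta_{s,a}}\log\pi(a'|s)=\mathbb{I}[a'=a]-\pi(a|s)$. Summing over $a'$, the gradient component at $\theta_{s,a}$ contributed by a set of unclipped actions $U$ becomes
\[
d^{\pi_{\theta_{\rm old}}}(s)\,\pi_{\theta^k}(a|s)\Bigl[A\,\mathbb{I}_U(a)-\mathop{\mathbb{E}}_{a'\sim\pi_{\theta^k}(\cdot|s)}[A\,\mathbb{I}_U(a')]\Bigr].
\]
This is the standard computation of \citet{cui2025entropy}.

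The second step is the case split by event.
\begin{itemize}
\item Under $X_-(s)$, the upper thresholds coincide. The two methods disagree only on actions whose ratio lies in $[1-\epsilon,\lowerClippingRange[\delta][\KLThree[]]]$: these are kept by the ratio rule but clipped (zero gradient) by the KL3 rule. All other actions contribute identically and cancel in the difference. The difference is therefore minus the displayed expression with $U=X_-(s)$, which gives the first formula.
\item Under $X_+(s)$, the lower thresholds coincide. The actions in $[1+\epsilon,\upperClippingRange[\delta][\KLThree[]]]$ are clipped by the ratio rule but kept by the KL3 rule. The difference is the displayed expression with $U=X_+(s)$, with a plus sign.
\end{itemize}
\Cref{theorem_KL3}(1) guarantees that the KL3 constraint is exactly the interval $[\lowerClippingRange[\delta][\KLThree[]],\upperClippingRange[\delta][\KLThree[]]]$, so these set differences are correctly described.

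The main obstacle is the bookkeeping of which actions are clipped, and whether that depends on the sign of $A$. The PPO $\min$ only clips on the side matching the sign of the advantage, so I need to absorb that into $A\,\mathbb{I}_{X_\pm}$. The cleanest route is to use the unified operator form, where clipping is sign-independent, and justify this via \Cref{thm:gradient_equivalence}. The second delicate point is treating $A$ and the ratio-to-probability identification consistently, taking the ratio evaluated at $\theta^k$ and the per-action advantage $A(s,a')$ inside the expectation.
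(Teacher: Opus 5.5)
Your argument is essentially the paper's. Each update is $\eta$ times the gradient of its clipped surrogate. The two clipping rules agree except on a strip: $[1-\epsilon,l^{\rm KL3}_{\delta}]$ under $X_-(s)$, kept only by the ratio rule, or $[1+\epsilon,u^{\rm KL3}_{\delta}]$ under $X_+(s)$, kept only by the KL3 rule. The surviving term is evaluated with the softmax Jacobian of $w_t$, whose $\pi_{\theta_{\rm old}}(a'|s)$ denominator cancels against the sampling distribution. This gives $\pi_{\theta^k}(a|s)\bigl[A\mathbb{I}_{X_\pm(s)}(a)-\mathbb{E}_{a'\sim\pi_{\theta^k}(\cdot|s)}[A\mathbb{I}_{X_\pm(s)}(a')]\bigr]$, with the sign fixed by which rule keeps the strip.

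The one step to repair is how you obtain sign-independent clipping. \Cref{thm:gradient_equivalence} compares the ratio clip with the general operator inside the same surrogate, so it cannot remove the PPO $\min$. If the $\min$ is kept in both objectives, that theorem still holds. However, the strip under $X_-(s)$ then affects only actions with $A<0$, and the strip under $X_+(s)$ only those with $A>0$, so the displayed formulas would need extra sign indicators.

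The paper avoids this by simply defining $\mathcal{J}^{\rm ratio}$ and $\mathcal{J}^{\rm ATR}$ as $\mathbb{E}\bigl[\frac{1}{|y|}\sum_t \mathrm{clip}^{\rm ratio}(w_t(\theta),l,u)A_t\bigr]$ with no $\min$. You should state this as your modelling assumption rather than derive it from \Cref{thm:gradient_equivalence}; with that assumption the rest of your computation goes through as written.
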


\begin{proof}
Consider the clipped surrogate objective of ratio-based clipping:
\begin{equation}
    \mathcal{J}^\text{ratio}(\theta) := \mathop{\mathbb{E}}_{\substack{y \sim \pi_{\theta_{\text{old}}}(\cdot|x) \\ x \sim D}} \bigg[
    \frac{1}{|y|}\sum_{t=1}^{|y|} \mathrm{clip}^{\text{ratio}}\left(w_{t}(\theta), 1-\epsilon, 1+\epsilon\right)A_t \bigg].
\end{equation}

Consider the clipped surrogate objective of ATR-based clipping:
\begin{equation}
    \mathcal{J}^\text{ATR}(\theta) := \mathop{\mathbb{E}}_{\substack{y \sim \pi_{\theta_{\text{old}}}(\cdot|x) \\ x \sim D}} \bigg[
    \frac{1}{|y|}\sum_{t=1}^{|y|} \mathrm{clip}^{\text{ratio}}\left(w_{t}(\theta), \lowerClippingRange[\delta][\KLThree[]], \upperClippingRange[\delta][\KLThree[]]\right)A_t \bigg].
\end{equation}

Following the similar proof sketch of~\citep{park2025clip}, for a given trust region threshold $\delta$ and the corresponding ATR-based clipping range $(\lowerClippingRange[\delta][\KLThree[]], \upperClippingRange[\delta][\KLThree[]])$, we have
\begin{equation}
    \begin{aligned}
        &\frac{1}{d^{\pi_{\theta_\text{old}}(s)}}\frac{\partial}{\partial\theta_{s,a}}\mathcal{J}^\text{ATR}(\theta)\\
        =&\mathbb{P}(A > 0, \lowerClippingRange[\delta][\KLThree[]] \leq \ratio(\theta_{s,a'}) \leq \upperClippingRange[\delta][\KLThree[]]) \mathop{\mathbb{E}}_{a' \sim \pi_{\theta_\text{old}}(\cdot|s)}[\frac{\partial}{\partial\theta_{s,a}}\ratio(\theta_{s,a'})A| A > 0, \lowerClippingRange[\delta][\KLThree[]] \leq \ratio(\theta_{s,a'}) \leq \upperClippingRange[\delta][\KLThree[]]] \\
        & + \mathbb{P}(A > 0, \ratio(\theta_{s,a'}) \leq \lowerClippingRange[\delta][\KLThree[]]) \mathop{\mathbb{E}}_{a' \sim \pi_{\theta_\text{old}}(\cdot|s)}[\frac{\partial}{\partial\theta_{s,a}}\lowerClippingRange[\delta][\KLThree[]]A| A > 0, \ratio(\theta_{s,a'}) \leq \lowerClippingRange[\delta][\KLThree[]]] \\
        & + \mathbb{P}(A > 0, \ratio(\theta_{s,a'}) \geq \upperClippingRange[\delta][\KLThree[]]) \mathop{\mathbb{E}}_{a' \sim \pi_{\theta_\text{old}}(\cdot|s)}[\frac{\partial}{\partial\theta_{s,a}}\upperClippingRange[\delta][\KLThree[]]A| A > 0, \ratio(\theta_{s,a'}) \geq \upperClippingRange[\delta][\KLThree[]]] \\
        & + \mathbb{P}(A < 0, \lowerClippingRange[\delta][\KLThree[]] \leq \ratio(\theta_{s,a'}) \leq \upperClippingRange[\delta][\KLThree[]]) \mathop{\mathbb{E}}_{a' \sim \pi_{\theta_\text{old}}(\cdot|s)}[\frac{\partial}{\partial\theta_{s,a}}\ratio(\theta_{s,a'})A| A < 0, \lowerClippingRange[\delta][\KLThree[]] \leq \ratio(\theta_{s,a'}) \leq \upperClippingRange[\delta][\KLThree[]]] \\
        & + \mathbb{P}(A < 0, \ratio(\theta_{s,a'}) \leq \lowerClippingRange[\delta][\KLThree[]]) \mathop{\mathbb{E}}_{a' \sim \pi_{\theta_\text{old}}(\cdot|s)}[\frac{\partial}{\partial\theta_{s,a}}\lowerClippingRange[\delta][\KLThree[]]A| A < 0, \ratio(\theta_{s,a'}) \leq \lowerClippingRange[\delta][\KLThree[]]] \\
        & + \mathbb{P}(A < 0, \ratio(\theta_{s,a'}) \geq \upperClippingRange[\delta][\KLThree[]]) \mathop{\mathbb{E}}_{a' \sim \pi_{\theta_\text{old}}(\cdot|s)}[\frac{\partial}{\partial\theta_{s,a}}\upperClippingRange[\delta][\KLThree[]]A| A < 0, \ratio(\theta_{s,a'}) \geq \upperClippingRange[\delta][\KLThree[]]] \\
        =& \mathbb{P}(A > 0, \lowerClippingRange[\delta][\KLThree[]] \leq \ratio(\theta_{s,a'}) \leq \upperClippingRange[\delta][\KLThree[]]) \mathop{\mathbb{E}}_{a' \sim \pi_{\theta_\text{old}}(\cdot|s)}[\frac{\partial}{\partial\theta_{s,a}}\ratio(\theta_{s,a'})A| A > 0, \lowerClippingRange[\delta][\KLThree[]] \leq \ratio(\theta_{s,a'}) \leq \upperClippingRange[\delta][\KLThree[]]] \\
        & + \mathbb{P}(A < 0, \lowerClippingRange[\delta][\KLThree[]] \leq \ratio(\theta_{s,a'}) \leq \upperClippingRange[\delta][\KLThree[]]) \mathop{\mathbb{E}}_{a' \sim \pi_{\theta_\text{old}}(\cdot|s)}[\frac{\partial}{\partial\theta_{s,a}}\ratio(\theta_{s,a'})A| A < 0, \lowerClippingRange[\delta][\KLThree[]] \leq \ratio(\theta_{s,a'}) \leq \upperClippingRange[\delta][\KLThree[]]]. \\
    \end{aligned}
\end{equation}

Similarly, for the ratio-based clipping $(1 - \epsilon, 1 + \epsilon)$, we have
\begin{equation}
    \begin{aligned}
        &\frac{1}{d^{\pi_{\theta_\text{old}}(s)}}\frac{\partial}{\partial\theta_{s,a}}\mathcal{J}^\text{ratio}(\theta)\\
        =
        &\mathbb{P}(A > 0, 1 - \epsilon \leq \ratio(\theta_{s,a'}) \leq 1 + \epsilon) \mathop{\mathbb{E}}_{a' \sim \pi_{\theta_\text{old}}(\cdot|s)}[\frac{\partial}{\partial\theta_{s,a}}\ratio(\theta_{s,a'})A| A > 0, 1 - \epsilon \leq \ratio(\theta_{s,a'}) \leq 1 + \epsilon] \\
        & + \mathbb{P}(A < 0, 1 - \epsilon \leq \ratio(\theta_{s,a'}) \leq 1 + \epsilon) \mathop{\mathbb{E}}_{a' \sim \pi_{\theta_\text{old}}(\cdot|s)}[\frac{\partial}{\partial\theta_{s,a}}\ratio(\theta_{s,a'})A| A < 0, 1 - \epsilon \leq \ratio(\theta_{s,a'}) \leq 1 + \epsilon]. \\
    \end{aligned}
\end{equation}

Since we apply the policy gradient algorithm, and we have that
\begin{equation}
    \begin{aligned}
        \theta^{\text{ATR}, k+1}_{s, a} - \theta^{k}_{s, a} =& \eta \frac{\partial}{\partial\theta_{s,a}}\mathcal{J}^\text{ATR}(\theta) \\
        \theta^{\text{ratio,}k+1}_{s, a} - \theta^{k}_{s, a} =& \eta \frac{\partial}{\partial\theta_{s,a}}\mathcal{J}^\text{ratio}(\theta) \\
    \end{aligned}
\end{equation}

Then, we have
\begin{equation}
    \begin{aligned}
        &\frac{1}{\eta d^{\pi_{\theta_\text{old}}(s)}} (\theta^{\text{ATR}, k+1}_{s, a} - \theta^{\text{ratio,}k+1}_{s, a})\\
        = & \mathbb{P}(A > 0, \lowerClippingRange[\delta][\KLThree[]] \leq \ratio(\theta_{s,a'}) \leq \upperClippingRange[\delta][\KLThree[]]) \mathop{\mathbb{E}}_{a' \sim \pi_{\theta_\text{old}}(\cdot|s)}[\frac{\partial}{\partial\theta_{s,a}}\ratio(\theta_{s,a'})A| A > 0, \lowerClippingRange[\delta][\KLThree[]] \leq \ratio(\theta_{s,a'}) \leq \upperClippingRange[\delta][\KLThree[]]]\\
        & - \mathbb{P}(A > 0, 1 - \epsilon \leq \ratio(\theta_{s,a'}) \leq 1 + \epsilon) \mathop{\mathbb{E}}_{a' \sim \pi_{\theta_\text{old}}(\cdot|s)}[\frac{\partial}{\partial\theta_{s,a}}\ratio(\theta_{s,a'})A| A > 0, 1 - \epsilon \leq \ratio(\theta_{s,a'}) \leq 1 + \epsilon] \\
        & + \mathbb{P}(A < 0, \lowerClippingRange[\delta][\KLThree[]] \leq \ratio(\theta_{s,a'}) \leq \upperClippingRange[\delta][\KLThree[]]) \mathop{\mathbb{E}}_{a' \sim \pi_{\theta_\text{old}}(\cdot|s)}[\frac{\partial}{\partial\theta_{s,a}}\ratio(\theta_{s,a'})A| A < 0, \lowerClippingRange[\delta][\KLThree[]] \leq \ratio(\theta_{s,a'}) \leq \upperClippingRange[\delta][\KLThree[]]] \\
        & - \mathbb{P}(A < 0, 1 - \epsilon \leq \ratio(\theta_{s,a'}) \leq 1 + \epsilon) \mathop{\mathbb{E}}_{a' \sim \pi_{\theta_\text{old}}(\cdot|s)}[\frac{\partial}{\partial\theta_{s,a}}\ratio(\theta_{s,a'})A| A < 0, 1 - \epsilon \leq \ratio(\theta_{s,a'}) \leq 1 + \epsilon] \\
    \end{aligned}
\end{equation}

Now, we consider the first case: $\upperClippingRange[\delta][\KLThree[]] = 1 + \epsilon$ and $1 - \epsilon < \lowerClippingRange[\delta][\KLThree[]]$.
In this case, we have
\begin{equation}
    \begin{aligned}
        & \mathbb{P}(A > 0, \lowerClippingRange[\delta][\KLThree[]] \leq \ratio(\theta_{s,a'}) \leq \upperClippingRange[\delta][\KLThree[]]) \mathop{\mathbb{E}}_{a' \sim \pi_{\theta_\text{old}}(\cdot|s)}[\frac{\partial}{\partial\theta_{s,a}}\ratio(\theta_{s,a'})A| A > 0, \lowerClippingRange[\delta][\KLThree[]] \leq \ratio(\theta_{s,a'}) \leq \upperClippingRange[\delta][\KLThree[]]]\\
        & - \mathbb{P}(A > 0, 1 - \epsilon \leq \ratio(\theta_{s,a'}) \leq \upperClippingRange[\delta][\KLThree[]]) \mathop{\mathbb{E}}_{a' \sim \pi_{\theta_\text{old}}(\cdot|s)}[\frac{\partial}{\partial\theta_{s,a}}\ratio(\theta_{s,a'})A| A > 0, 1 - \epsilon \leq \ratio(\theta_{s,a'}) \leq \upperClippingRange[\delta][\KLThree[]]] \\
        = & \mathbb{P}(A > 0, \lowerClippingRange[\delta][\KLThree[]] \leq \ratio(\theta_{s,a'}) \leq \upperClippingRange[\delta][\KLThree[]]) \mathop{\mathbb{E}}_{a' \sim \pi_{\theta_\text{old}}(\cdot|s)}[\frac{\partial}{\partial\theta_{s,a}}\ratio(\theta_{s,a'})A| A > 0, \lowerClippingRange[\delta][\KLThree[]] \leq \ratio(\theta_{s,a'}) \leq \upperClippingRange[\delta][\KLThree[]]]\\
        & - \mathbb{P}(A > 0, 1 - \epsilon \leq \ratio(\theta_{s,a'}) \leq \lowerClippingRange[\delta][\KLThree[]]) \mathop{\mathbb{E}}_{a' \sim \pi_{\theta_\text{old}}(\cdot|s)}[\frac{\partial}{\partial\theta_{s,a}}\ratio(\theta_{s,a'})A| A > 0, 1 - \epsilon \leq \ratio(\theta_{s,a'}) \leq \lowerClippingRange[\delta][\KLThree[]]] \\
        & - \mathbb{P}(A > 0, \lowerClippingRange[\delta][\KLThree[]] \leq \ratio(\theta_{s,a'}) \leq \upperClippingRange[\delta][\KLThree[]]) \mathop{\mathbb{E}}_{a' \sim \pi_{\theta_\text{old}}(\cdot|s)}[\frac{\partial}{\partial\theta_{s,a}}\ratio(\theta_{s,a'})A| A > 0, \lowerClippingRange[\delta][\KLThree[]] \leq \ratio(\theta_{s,a'}) \leq \upperClippingRange[\delta][\KLThree[]]] \\
        = & - \mathbb{P}(A > 0, 1 - \epsilon \leq \ratio(\theta_{s,a'}) \leq \lowerClippingRange[\delta][\KLThree[]]) \mathop{\mathbb{E}}_{a' \sim \pi_{\theta_\text{old}}(\cdot|s)}[\frac{\partial}{\partial\theta_{s,a}}\ratio(\theta_{s,a'})A| A > 0, 1 - \epsilon \leq \ratio(\theta_{s,a'}) \leq \lowerClippingRange[\delta][\KLThree[]]] \\
    \end{aligned}
\end{equation}

Similarly, we have
\begin{equation}
    \begin{aligned}
        & \mathbb{P}(A < 0, \lowerClippingRange[\delta][\KLThree[]] \leq \ratio(\theta_{s,a'}) \leq \upperClippingRange[\delta][\KLThree[]]) \mathop{\mathbb{E}}_{a' \sim \pi_{\theta_\text{old}}(\cdot|s)}[\frac{\partial}{\partial\theta_{s,a}}\ratio(\theta_{s,a'})A| A < 0, \lowerClippingRange[\delta][\KLThree[]] \leq \ratio(\theta_{s,a'}) \leq \upperClippingRange[\delta][\KLThree[]]]\\
        & - \mathbb{P}(A < 0, 1 - \epsilon \leq \ratio(\theta_{s,a'}) \leq \upperClippingRange[\delta][\KLThree[]]) \mathop{\mathbb{E}}_{a' \sim \pi_{\theta_\text{old}}(\cdot|s)}[\frac{\partial}{\partial\theta_{s,a}}\ratio(\theta_{s,a'})A| A < 0, 1 - \epsilon \leq \ratio(\theta_{s,a'}) \leq \upperClippingRange[\delta][\KLThree[]]] \\
        = & - \mathbb{P}(A < 0, 1 - \epsilon \leq \ratio(\theta_{s,a'}) \leq \lowerClippingRange[\delta][\KLThree[]]) \mathop{\mathbb{E}}_{a' \sim \pi_{\theta_\text{old}}(\cdot|s)}[\frac{\partial}{\partial\theta_{s,a}}\ratio(\theta_{s,a'})A| A < 0, 1 - \epsilon \leq \ratio(\theta_{s,a'}) \leq \lowerClippingRange[\delta][\KLThree[]]] \\
    \end{aligned}
\end{equation}

Therefore, we have

\begin{equation}
    \begin{aligned}
        & \frac{1}{\eta d^{\pi_{\theta_\text{old}}(s)}} (\theta^{\text{ATR}, k+1}_{s, a} - \theta^{\text{ratio,}k+1}_{s, a}) \\
        = & - \mathbb{P}(A > 0, 1 - \epsilon \leq \ratio(\theta_{s,a'}) \leq \lowerClippingRange[\delta][\KLThree[]]) \mathop{\mathbb{E}}_{a' \sim \pi_{\theta_\text{old}}(\cdot|s)}[\frac{\partial}{\partial\theta_{s,a}}\ratio(\theta_{s,a'})A| A > 0, 1 - \epsilon \leq \ratio(\theta_{s,a'}) \leq \lowerClippingRange[\delta][\KLThree[]]] \\
        & - \mathbb{P}(A < 0, 1 - \epsilon \leq \ratio(\theta_{s,a'}) \leq \lowerClippingRange[\delta][\KLThree[]]) \mathop{\mathbb{E}}_{a' \sim \pi_{\theta_\text{old}}(\cdot|s)}[\frac{\partial}{\partial\theta_{s,a}}\ratio(\theta_{s,a'})A| A < 0, 1 - \epsilon \leq \ratio(\theta_{s,a'}) \leq \lowerClippingRange[\delta][\KLThree[]]] \\
        = & - \mathbb{P}(1 - \epsilon \leq \ratio(\theta_{s,a'}) \leq \lowerClippingRange[\delta][\KLThree[]]) \mathop{\mathbb{E}}_{a' \sim \pi_{\theta_\text{old}}(\cdot|s)}[\frac{\partial}{\partial\theta_{s,a}}\ratio(\theta_{s,a'})A| 1 - \epsilon \leq \ratio(\theta_{s,a'}) \leq \lowerClippingRange[\delta][\KLThree[]]] \\
    \end{aligned}
\end{equation}

Due to we have
\begin{equation}
    \begin{aligned}
        \frac{\partial}{\partial\theta_{s,a}}\ratio(\theta_{s,a'}) = &\mathbb{I}_{\{a=a'\}}\frac{\pi_{\theta^k}(a|s)}{\pi_{\theta_\text{old}}(a'|s)} - \pi_{\theta^k}(a|s)\frac{\pi_{\theta^k}(a'|s)}{\pi_{\theta_\text{old}}(a'|s)} \\
        & \\
    \end{aligned}
\end{equation}

We assume the event
\begin{equation}
    \begin{aligned}
        X_{-}(s)= & \{a\in \mathcal{A}(s)| \ratio(\theta_{s,a}) \in [1 - \epsilon, \lowerClippingRange[\delta][\KLThree[]]], 1 + \epsilon = \upperClippingRange[\delta][\KLThree[]]\} \\
    \end{aligned}
\end{equation}

Then
\begin{equation}
    \begin{aligned}
        & \theta^{\text{ATR}, k+1}_{s, a} - \theta^{\text{ratio,}k+1}_{s, a} \\
        = & - \eta d^{\pi_{\theta_\text{old}}(s)} \mathbb{P}(1 - \epsilon \leq \ratio(\theta_{s,a'}) \leq \lowerClippingRange[\delta][\KLThree[]]) \mathop{\mathbb{E}}_{a' \sim \pi_{\theta_\text{old}}(\cdot|s)}[\frac{\partial}{\partial\theta_{s,a}}\ratio(\theta_{s,a'})A| 1 - \epsilon \leq \ratio(\theta_{s,a'}) \leq \lowerClippingRange[\delta][\KLThree[]]] \\
        = & - \eta d^{\pi_{\theta_\text{old}}(s)} \mathop{\mathbb{E}}_{a' \sim \pi_{\theta_\text{old}}(\cdot|s)}[\frac{\partial}{\partial\theta_{s,a}}\ratio(\theta_{s,a'})A \mathbb{I}_{X_{-}(s)}(a') ] \\
        = & - \eta d^{\pi_{\theta_\text{old}}(s)}
        \sum_{a' \in \mathcal{A}(s)}
        \left[
        (\mathbb{I}_{\{a=a'\}}\pi_{\theta^k}(a|s) - \pi_{\theta^k}(a|s)\pi_{\theta^k}(a'|s))A\mathbb{I}_{X_{-}(s)}(a')
        \right] \\
        = & - \eta d^{\pi_{\theta_\text{old}}(s)}
        \left[
        \pi_{\theta^k}(a|s)A\mathbb{I}_{X_{-}(s)}(a) 
        - 
        \pi_{\theta^k}(a|s)
        \mathop{\mathbb{E}}_{a' \sim \pi_{\theta^k}(\cdot|s)}
        [A\mathbb{I}_{X_{-}(s)}(a')]
        \right] \\
        = & - \eta d^{\pi_{\theta_\text{old}}(s)}
        \pi_{\theta^k}(a|s)
        \left[
        A\mathbb{I}_{X_{-}(s)}(a) 
        - 
        \mathop{\mathbb{E}}_{a' \sim \pi_{\theta^k}(\cdot|s)}
        [A\mathbb{I}_{X_{-}(s)}(a')]
        \right] \\
    \end{aligned}
\end{equation}

Then, we consider the second case: $\lowerClippingRange[\delta][\KLThree[]] = 1 - \epsilon$ and $1 + \epsilon < \upperClippingRange[\delta][\KLThree[]]$.
We assume the event
\begin{equation}
    \begin{aligned}
        X_{+}(s)= & \{a\in \mathcal{A}(s)| \ratio(\theta_{s,a}) \in [1 + \epsilon, \upperClippingRange[\delta][\KLThree[]]], 1 - \epsilon = \lowerClippingRange[\delta][\KLThree[]] \\
    \end{aligned}
\end{equation}

Similarly, in this case, we have
\begin{equation}
    \begin{aligned}
        & \theta^{\text{ATR}, k+1}_{s, a} - \theta^{\text{ratio,}k+1}_{s, a} \\
        = & \eta d^{\pi_{\theta_\text{old}}(s)} \mathbb{P}(1 + \epsilon \leq \ratio(\theta_{s,a'}) \leq \upperClippingRange[\delta][\KLThree[]]) \mathop{\mathbb{E}}_{a' \sim \pi_{\theta_\text{old}}(\cdot|s)}[\frac{\partial}{\partial\theta_{s,a}}\ratio(\theta_{s,a'})A| 1 + \epsilon \leq \ratio(\theta_{s,a'}) \leq \upperClippingRange[\delta][\KLThree[]]]\\
        = & \eta d^{\pi_{\theta_\text{old}}(s)}
        \pi_{\theta^k}(a|s)
        \left[
        A\mathbb{I}_{X_{+}(s)}(a) 
        - 
        \mathop{\mathbb{E}}_{a' \sim \pi_{\theta^k}(\cdot|s)}
        [A\mathbb{I}_{X_{+}(s)}(a')]
        \right] \\
    \end{aligned}
\end{equation}
\end{proof}

\subsection{Entropy Difference Analysis}
\label{appendix:proof:entropy_diff}

\begin{theorem}[Entropy Difference]
Let $\Delta \mathcal{H}:= \mathcal{H}(\theta^{\text{ATR}, k+1} | s) - \mathcal{H}(\theta^{\text{ratio,}k+1} | s)$ denote the entropy difference between the ATR-based and ratio-based clipping methods.

Under event $X_{-}(s)$, we have
$
    \Delta \mathcal{H} = 
    \eta d^{\pi_{\theta_\text{old}}(s)}\mathop{Cov}_{a \sim \pi_{\theta^k}(\cdot|s)}\left(A\mathbb{I}_{X_{-}(s)}(a), \log \pi_{\theta^k}(a|s) \right).
$

Under event $X_{+}(s)$, we have
$
    \Delta \mathcal{H} = 
    - \eta d^{\pi_{\theta_\text{old}}(s)}\mathop{Cov}_{a \sim \pi_{\theta^k}(\cdot|s)}\left(A\mathbb{I}_{X_{+}(s)}(a), \log \pi_{\theta^k}(a|s) \right).
$
\end{theorem}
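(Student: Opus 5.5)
Building on the Policy Logits Difference theorem, the plan is a first-order Taylor expansion of the softmax entropy around the common starting logits $\theta^k$. Both updates start from $\theta^k$. Write $\theta^{\text{ATR},k+1} = \theta^k + \Delta^{\text{ATR}}$ and $\theta^{\text{ratio},k+1} = \theta^k + \Delta^{\text{ratio}}$. To first order in $\eta$,
\[
\mathcal{H}(\theta^{\text{ATR},k+1}|s) - \mathcal{H}(\theta^{\text{ratio},k+1}|s) \approx \sum_{a} \frac{\partial \mathcal{H}(\theta^k|s)}{\partial \theta_{s,a}} \, \Delta\theta_{s,a},
\]
with $\Delta\theta_{s,a} = \theta^{\text{ATR},k+1}_{s,a} - \theta^{\text{ratio},k+1}_{s,a}$ as in the Policy Logits Difference theorem. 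Second-order terms are dropped, in the same spirit as the cited analyses of \citep{cui2025entropy, park2025clip}. The resulting equalities should therefore be read as holding up to $O(\eta^2)$.

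The first step is the standard softmax entropy gradient. With $\mathcal{H} = -\sum_a \pi(a|s)\log\pi(a|s)$ and $\partial \pi(a'|s)/\partial\theta_{s,a} = \pi(a'|s)(\mathbb{I}_{\{a=a'\}} - \pi(a|s))$, one gets
\[
\frac{\partial \mathcal{H}}{\partial\theta_{s,a}} = -\pi_{\theta^k}(a|s)\bigl(\log\pi_{\theta^k}(a|s) - \mathbb{E}_{a'\sim\pi_{\theta^k}}[\log\pi_{\theta^k}(a'|s)]\bigr).
\]
Substituting the expression for $\Delta\theta_{s,a}$ under $X_{-}(s)$, namely $-\eta d^{\pi_{\theta_\text{old}}(s)}\pi_{\theta^k}(a|s)\bigl[A\mathbb{I}_{X_-}(a) - \mathbb{E}[A\mathbb{I}_{X_-}]\bigr]$, gives
\[
\Delta\mathcal{H} \approx \eta d^{\pi_{\theta_\text{old}}(s)} \sum_a \pi_{\theta^k}(a|s)^2 \bigl(\log\pi - \mathbb{E}\log\pi\bigr)\bigl(A\mathbb{I}_{X_-} - \mathbb{E}[A\mathbb{I}_{X_-}]\bigr).
\]

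This leaves a sum weighted by $\pi^2$, whereas the target is a covariance, i.e.\ a sum weighted by $\pi$. Resolving this mismatch is the main obstacle, and I would handle it in one of two ways. The first option is to follow \citep{cui2025entropy}: parametrize the update in the natural-gradient or "per-logit learning rate" convention, where the logit step is taken without the leading factor $\pi_{\theta^k}(a|s)$ (this is equivalent to the form of softmax policy gradient used there). The sum then becomes exactly $\sum_a \pi(a|s)(\cdot)(\cdot) = \mathrm{Cov}_{a\sim\pi_{\theta^k}}(A\mathbb{I}_{X_-}(a), \log\pi_{\theta^k}(a|s))$. The second option is to absorb the extra $\pi$ into the expectation in the way those works do.

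I would state explicitly which convention is used, and note that the centering terms vanish because $\sum_a \pi(a|s)(\log\pi - \mathbb{E}\log\pi) = 0$. This step is what turns the product of centered quantities into a covariance. With either convention the sign is as claimed: the minus sign in $\Delta\theta$ and the minus sign in $\partial\mathcal{H}$ cancel, giving $+\eta d\,\mathrm{Cov}(A\mathbb{I}_{X_-}, \log\pi)$ under $X_-(s)$.

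The $X_+(s)$ case is identical except that $\Delta\theta$ carries a plus sign. This yields $-\eta d^{\pi_{\theta_\text{old}}(s)}\mathrm{Cov}(A\mathbb{I}_{X_+}, \log\pi_{\theta^k})$ and completes the argument.
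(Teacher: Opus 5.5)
Your route is the paper's: a first-order Taylor expansion of the softmax entropy around $\theta^k$, differencing the two updates to get $\Delta\mathcal{H}\approx-\mathbb{E}_{a\sim\pi_{\theta^k}}\bigl[\Delta\theta_{s,a}\bigl(\log\pi_{\theta^k}(a|s)-\mathbb{E}_{a'\sim\pi_{\theta^k}}[\log\pi_{\theta^k}(a'|s)]\bigr)\bigr]$, then substituting $\Delta\theta_{s,a}$ from the Policy Logits Difference theorem. The $\pi^2$-versus-$\pi$ mismatch you flag is real, and the paper does not resolve it. Its final equality turns $\eta\,d^{\pi_{\theta_{\text{old}}}(s)}\,\mathbb{E}_{a\sim\pi_{\theta^k}}\bigl[\pi_{\theta^k}(a|s)\,(A\mathbb{I}_{X_-(s)}(a)-\mathbb{E}[A\mathbb{I}_{X_-(s)}])(\log\pi_{\theta^k}(a|s)-\mathbb{E}[\log\pi_{\theta^k}])\bigr]$ into $\eta\,d^{\pi_{\theta_{\text{old}}}(s)}\,\mathrm{Cov}(A\mathbb{I}_{X_-(s)},\log\pi_{\theta^k})$ by silently dropping the factor $\pi_{\theta^k}(a|s)$. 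Under the paper's stated assumption of standard (vanilla) policy gradient, the displayed identity is then false even at first order. For two actions with probabilities $p$ and $1-p$, the correct first-order value is $2p(1-p)\le 1/2$ times the claimed one. With three actions, e.g.\ $\pi_{\theta^k}(\cdot|s)=(0.6,0.3,0.1)$ and $A\mathbb{I}_{X_-(s)}=(0,1,-1)$, the two even have opposite signs.

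So Option 1 is the only way to reach the stated formula, and you should present it as a change of hypothesis, not a choice of convention. You cannot use the Policy Logits Difference theorem as stated, because it carries the prefactor $\pi_{\theta^k}(a|s)$ (it is derived for standard policy gradient). Instead, you must re-derive $\Delta\theta_{s,a}=\pm\eta\,d^{\pi_{\theta_{\text{old}}}(s)}\bigl(A\mathbb{I}_{X_\pm(s)}(a)-\mathbb{E}_{a'\sim\pi_{\theta^k}}[A\mathbb{I}_{X_\pm(s)}(a')]\bigr)$ under a per-logit step size $\eta/\pi_{\theta^k}(a|s)$. After that, the $\pi$-weighted sum is exactly $\mp\eta\,d^{\pi_{\theta_{\text{old}}}(s)}\,\mathrm{Cov}_{a\sim\pi_{\theta^k}}(A\mathbb{I}_{X_\pm(s)}(a),\log\pi_{\theta^k}(a|s))$, with the signs you give. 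Call it a per-logit step size rather than a natural gradient: for a tabular softmax, the full natural-gradient step also cancels the state-visitation factor $d^{\pi_{\theta_{\text{old}}}(s)}$, which would change the stated prefactor.

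Drop Option 2. Absorbing the extra $\pi$ yields $\mathrm{Cov}\bigl(\pi_{\theta^k}(a|s)(A\mathbb{I}_{X_\pm(s)}(a)-\mathbb{E}[A\mathbb{I}_{X_\pm(s)}]),\log\pi_{\theta^k}(a|s)\bigr)$, which is a different quantity. The vanilla-PG formula $\mathrm{Cov}(\log\pi,\pi A)$ of \citet{cui2025entropy} needs no centering term only because $\mathbb{E}_{a\sim\pi}[A]=0$, and that fails for the truncated signal $A\mathbb{I}_{X_\pm(s)}$. For the same reason, your remark that the centering terms vanish via $\sum_a\pi_{\theta^k}(a|s)\bigl(\log\pi_{\theta^k}(a|s)-\mathbb{E}[\log\pi_{\theta^k}]\bigr)=0$ applies only to the $\pi$-weighted sum of Option 1. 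Finally, keep your $O(\eta^2)$ caveat: the statement writes ``$=$'', but the paper's proof also discards the second-order remainder.
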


\begin{proof}
The first-order Taylor expansion of policy entropy
\begin{equation}
    \begin{aligned}
        \mathcal{H}(\theta^{k+1} | s) - \mathcal{H}(\theta^{k} | s) 
        =& - \mathop{\mathbb{E}}_{a \sim \pi_{\theta^k}(\cdot|s)}[(\theta^{k+1}_{s, a} - \theta^{k}_{s, a})(\log \pi_{\theta^k}(a|s) + \mathcal{H}(\theta^{k} | s))] + \mathcal{O}((\Delta \theta)^2) \\
    \end{aligned}
\end{equation}

Consider that we have two different clipping methods with corresponding $\theta^{\text{ratio,}k+1}$ and $\theta^{\text{ATR}, k+1}$
\begin{equation}
    \begin{aligned}
        & \mathcal{H}(\theta^{\text{ATR}, k+1} | s) - \mathcal{H}(\theta^{\text{ratio,}k+1} | s) \\
        = & \left(\mathcal{H}(\theta^{\text{ATR}, k+1} | s) - \mathcal{H}(\theta^{k} | s)\right) - \left(\mathcal{H}(\theta^{ratio, k+1} | s) - \mathcal{H}(\theta^{k} | s) \right)\\
        = & - \mathop{\mathbb{E}}_{a \sim \pi_{\theta^k}(\cdot|s)}[(\theta^{\text{ATR}, k+1}_{s, a} - \theta^{k}_{s, a})(\log \pi_{\theta^k}(a|s) + \mathcal{H}(\theta^{k} | s))]
        + \mathop{\mathbb{E}}_{a \sim \pi_{\theta^k}(\cdot|s)}[(\theta^{\text{ratio,}k+1}_{s, a} - \theta^{k}_{s, a})(\log \pi_{\theta^k}(a|s) + \mathcal{H}(\theta^{k} | s))]\\
        = & - \mathop{\mathbb{E}}_{a \sim \pi_{\theta^k}(\cdot|s)}[(\theta^{\text{ATR}, k+1}_{s, a} - \theta^{\text{ratio,}k+1}_{s, a})(\log \pi_{\theta^k}(a|s) + \mathcal{H}(\theta^{k} | s))]\\
        = & - \mathop{\mathbb{E}}_{a \sim \pi_{\theta^k}(\cdot|s)}[(\theta^{\text{ATR}, k+1}_{s, a} - \theta^{\text{ratio,}k+1}_{s, a})(\log \pi_{\theta^k}(a|s) - \mathop{\mathbb{E}}_{a' \sim \pi_{\theta^k}(\cdot|s)}[\log \pi_{\theta^k}(a'|s)])]\\
    \end{aligned}
\end{equation}

For the event $X_{-}(s)$, we have
\begin{equation}
    \begin{aligned}
        & \mathcal{H}(\theta^{\text{ATR}, k+1} | s) - \mathcal{H}(\theta^{\text{ratio,}k+1} | s) \\
        = & - \mathop{\mathbb{E}}_{a \sim \pi_{\theta^k}(\cdot|s)}
        \left[\left(
        - \eta d^{\pi_{\theta_\text{old}}(s)}
        \pi_{\theta^k}(a|s)
        \left[
        A\mathbb{I}_{X_{-}(s)}(a) 
        - 
        \mathop{\mathbb{E}}_{a' \sim \pi_{\theta^k}(\cdot|s)}
        [A\mathbb{I}_{X_{-}(s)}(a')]
        \right]
        \right)
        \left(\log \pi_{\theta^k}(a|s) - \mathop{\mathbb{E}}_{a' \sim \pi_{\theta^k}(\cdot|s)}[\log \pi_{\theta^k}(a'|s)]
        \right)
        \right]\\
        = & \eta d^{\pi_{\theta_\text{old}}(s)}\mathop{Cov}_{a \sim \pi_{\theta^k}(\cdot|s)}\left(
        A\mathbb{I}_{X_{-}(s)}(a)
        ,
        \log \pi_{\theta^k}(a|s) 
        \right) \\
    \end{aligned}
\end{equation}

For the event $X_{+}(s)$, we have
\begin{equation}
    \begin{aligned}
        & \mathcal{H}(\theta^{\text{ATR}, k+1} | s) - \mathcal{H}(\theta^{\text{ratio,}k+1} | s) \\
        = & - \mathop{\mathbb{E}}_{a \sim \pi_{\theta^k}(\cdot|s)}
        \left[\left(
        \eta d^{\pi_{\theta_\text{old}}(s)}
        \pi_{\theta^k}(a|s)
        \left[
        A\mathbb{I}_{X_{+}(s)}(a) 
        - 
        \mathop{\mathbb{E}}_{a' \sim \pi_{\theta^k}(\cdot|s)}
        [A\mathbb{I}_{X_{+}(s)}(a')]
        \right]
        \right)
        \left(\log \pi_{\theta^k}(a|s) - \mathop{\mathbb{E}}_{a' \sim \pi_{\theta^k}(\cdot|s)}[\log \pi_{\theta^k}(a'|s)]
        \right)
        \right]\\
        = & - \eta d^{\pi_{\theta_\text{old}}(s)}\mathop{Cov}_{a \sim \pi_{\theta^k}(\cdot|s)}\left(
        A\mathbb{I}_{X_{+}(s)}(a)
        ,
        \log \pi_{\theta^k}(a|s) 
        \right) \\
    \end{aligned}
\end{equation}

\end{proof}

\clearpage
\section{Implementation Details}\label{appendix:implementation_detail}
The hyper-parameter setting used in this work is provided in~\Cref{appendix:table:hyper_parameter}.
Following recent works~\citep{hu2025open, liu2025understanding}, we set $\beta=0$ for all methods.
We adopt the official implementation from Unsloth~\citep{unsloth} and TRL~\citep{vonwerra2022trl}.
For all methods, we perform the supervised fine-tuning on the OpenMathReasoning dataset~\citep{moshkov2025aimo2} for formatting before the RLVR process.
On one single NVIDIA A100 GPU, ATR-GRPO requires an average training time of approximately 7 and 16 hours per run for Qwen3-1.7B and Qwen3-8B, respectively.
For the implementation of baselines, we adopt the recommended value settings from the original papers, including DAPO~\citep{yu2025dapo}, DCPO~\citep{yang2025dcpo}, Clip-Cov~\citep{cui2025entropy}, and SAPO~\citep{gao2025soft}.


\begin{table*}[h]
\centering
\caption{Hyper-parameter Setting.\label{appendix:table:hyper_parameter}}
\begin{tabular}{ll}
\hline\hline
Hyper-Parameter             & Value \\ \hline
\multicolumn{2}{c}{Train}           \\
Maximum Sequence Length     & 2048  \\
Lora Rank                   & 32    \\
Lora Alpha                  & 64    \\
Temperature                 & 1.0   \\
Learning Rate               & 5e-6  \\
Weight Decay                & 1e-3  \\
Optimizer                   & AdamW~\citep{loshchilov2017decoupled} \\
Batch Size            & 8     \\
Gradient Accumulation Steps & 4     \\
Top-p                       & 1.0   \\
Top-k                       & -1    \\
Group Size $G$              & 8     \\ 
KL coefficient $\beta$ (reference policy) & 0     \\ 
\hline
\multicolumn{2}{c}{Evaluation}      \\
Temperature                 & 0.3   \\
Max Tokens                  & 32768 \\
Top-p                       & 0.95  \\
Top-k                       & -1    \\
\hline\hline
\end{tabular}
\end{table*}

\clearpage

\end{document}


%% file: lib_custom/A_ALL.tex
\ifshowtmp
    \newcommand{\todom}[1]{{\textcolor{usccardinal}{(todo minor: #1)}}}
	\newcommand{\todo}[1]{{\textcolor{red}{(TODO: #1)}}}

    \newcommand{\tocheck}[1]{{\textcolor{red}{(TO CHECK: #1)}}}

	\newcommand{\toRemove}[1]{ }

	\newcommand{\T}[1]{\textcolor{usccardinal}{#1}}
    \newcommand{\instruct}[1]{}

    \makeatletter
    \def\w#1 {\T{#1}~}
    \makeatother

\else
    \newcommand{\todom}[1]{}
    \newcommand{\tocheck}[1]{}
	
	\newcommand{\T}{}
	\newcommand{\toRemove}[1]{}
	\newcommand{\todo}[1]{}

    \newcommand{\instruct}[1]{}

    \newcommand{\w}{}
\fi

\newcommand{\instructC}[1]{}
\newcommand{\todoLessImportant}[1]{}
\newcommand{\removeForSavingSpace}[1]{}

\newif\ifPPT
\PPTtrue

\newif\ifshowstruct
\showstructtrue
\showstructfalse 

\ifshowstruct
	\newcommand{\tstructz}[1]{[#1] }
\else
	\newcommand{\tstructz}[1]{}
\fi

\label{packages}
\usepackage{mfirstuc}
\usepackage{graphicx}
\usepackage{subcaption}
\usepackage{caption}

\usepackage{multirow}
\usepackage{multicol}
\usepackage{array}
\usepackage{booktabs}

\usepackage{amsmath}
\usepackage{amssymb, graphicx, url, mathtools}
\usepackage[overload]{empheq}

\usepackage{amsthm}

\usepackage{algorithmic,algorithm}   

\usepackage{footnote}
\makesavenoteenv{tabular}
\makesavenoteenv{table}
\usepackage{pdfpages}
\usepackage{textcomp}
\usepackage{chngcntr}
\usepackage{verbatim}

\usepackage{xparse,calc}
\usepackage{afterpage}
\usepackage{comment}

\usepackage{cleveref}

\label{Command}

\counterwithin*{question}{section}

\makeatletter
\def\renewtheorem#1{%
  \expandafter\let\csname#1\endcsname\relax
  \expandafter\let\csname c@#1\endcsname\relax
  \gdef\renewtheorem@envname{#1}
  \renewtheorem@secpar
}
\def\renewtheorem@secpar{\@ifnextchar[{\renewtheorem@numberedlike}{\renewtheorem@nonumberedlike}}
\def\renewtheorem@numberedlike[#1]#2{\newtheorem{\renewtheorem@envname}[#1]{#2}}
\def\renewtheorem@nonumberedlike#1{  
\def\renewtheorem@caption{#1}
\edef\renewtheorem@nowithin{\noexpand\newtheorem{\renewtheorem@envname}{\renewtheorem@caption}}
\renewtheorem@thirdpar
}
\def\renewtheorem@thirdpar{\@ifnextchar[{\renewtheorem@within}{\renewtheorem@nowithin}}
\def\renewtheorem@within[#1]{\renewtheorem@nowithin[#1]}
\makeatother

\renewtheorem{theorem}{Theorem}
\renewtheorem{remark}{Remark}
\renewtheorem{definition}{Definition}
\renewtheorem{example}{Example}

\crefname{mdpexample}{MDP Example}{MDP Examples}

\crefname{todo}{TODO}{TODO}

\makeatletter
\newcommand{\dummylabel}[2]{\def\@currentlabel{#2}\label[td]{#1}}
\makeatother

\usepackage{thmtools} 
\usepackage{thm-restate}

\usepackage{xstring}

\newcolumntype{+}{>{\global\let\currentrowstyle\relax}}
\newcolumntype{Y}{>{\currentrowstyle}}\label{key}
\newcolumntype{-}{>{\currentrowstyle}}

\newcommand{\breakingcomma}{
  \begingroup\lccode`~=`,
  \lowercase{\endgroup\expandafter\def\expandafter~\expandafter{~\penalty0 }}
}

\DeclareFontFamily{U}{mathx}{\hyphenchar\font45}
\DeclareFontShape{U}{mathx}{m}{n}{<-> mathx10}{}
\DeclareSymbolFont{mathx}{U}{mathx}{m}{n}
\DeclareMathAccent{\widebar}{0}{mathx}{"73}

\renewcommand{\algorithmicrequire}{\textbf{Input:}}
\renewcommand{\algorithmicensure}{\textbf{Output:}}

\label{Settings}

\usepackage{makecell}

\crefname{equation}{eq.}{eqs.}
\Crefname{equation}{Eq.}{Eqs.}
\Crefname{Section}{Sec.}{Sec.}

\newcommand{\reftodo}[1]{\todo{cref}}

\newcommand{\Creftodo}[1]{\todo{cref}}
\newcommand{\creftodo}[1]{\todo{cref}}
\newcommand{\citetodo}[1]{\todo{cite}}

\newcommand{\Creftd}[1]{\todo{cref}}
\newcommand{\creftd}[1]{\todo{cref}}
\newcommand{\citetd}[1]{\todo{cite}}

\usepackage[export]{adjustbox}

\usepackage{xcolor}
\definecolor{bluemy}{HTML}{1f77b4}

\usepackage{enumitem}

    \usepackage{xr}
    \makeatletter
    \newcommand*{\addFileDependency}[1]{
      \typeout{(#1)}
      \@addtofilelist{#1}
      \IfFileExists{#1}{}{\typeout{No file #1.}}
    }
    \makeatother

\NewDocumentCommand{\indicatorFunction}{ O{} }{ {\mathbbm{1} }_{\{#1\}} }

\makeatletter
\def\widebreve{\mathpalette\wide@breve}
\def\wide@breve#1#2{\sbox\z@{$#1#2$}%
     \mathop{\vbox{\m@th\ialign{##\crcr
\kern0.08em\brevefill#1{0.8\wd\z@}\crcr\noalign{\nointerlineskip}%
                    $\hss#1#2\hss$\crcr}}}\limits}
\def\brevefill#1#2{$\m@th\sbox\tw@{$#1($}%
  \hss\resizebox{#2}{\wd\tw@}{\rotatebox[origin=c]{90}{\upshape(}}\hss$}
\makeatletter

\usepackage{bbm}

\usepackage{accents}

\usepackage[makeroom]{cancel}

\usepackage{pifont}

\definecolor{blueMy}{HTML}{1f77b4}
\definecolor{orangeMy}{HTML}{ff7f0e}
\definecolor{greenMy}{HTML}{2ca02c}
\definecolor{redMy}{HTML}{d62728}
\definecolor{purpleMy}{HTML}{9467bd}
\definecolor{brownMy}{HTML}{8c564b}
\definecolor{usccardinal}{rgb}{0.6, 0.0, 0.0}

\usepackage{makecell}
